\DeclareMathOperator*{\concat}{\scalerel*{\Vert}{\sum}}
\newcommand{\argmin}{\mathop{\rm arg~min}}
\newtheorem{theorem}{Theorem}
\newtheorem{proposition}{Proposition}
\newtheorem{lemma}{Lemma}
\newtheorem{definition}{Definition}
\newcommand{\hquad}{\hspace{0.5em}} 
\newcommand{\figcaption}[1]{\def\@captype{figure}\caption{#1}}
\newcommand{\tblcaption}[1]{\def\@captype{table}\caption{#1}}
\newcommand{\tabenv}{\def\@captype{table}}
\newcommand{\algmargin}{\the\ALG@thistlm}
\newlength{\whilewidth}
\algnewcommand{\parState}[1]{\State%
  \parbox[t]{\dimexpr\linewidth-\algmargin}{\strut #1\strut}}
\title[On the Transit Obfuscation Problem]{On the Transit Obfuscation Problem}
\author{Hideaki Takahashi}
\affiliation{
  \institution{The University of Tokyo}
  \city{Tokyo}
  \country{Japan}}
\email{takahashi-hideaki567@g.ecc.u-tokyo.ac.jp}
\author{Alex Fukunaga}
\affiliation{
  \institution{The University of Tokyo}
  \city{Tokyo}
  \country{Japan}}
\email{fukunaga@idea.c.u-tokyo.ac.jp}
\begin{abstract}
Concealing an intermediate point on a route or visible from a route is an important goal in some transportation and surveillance scenarios. This paper studies the Transit Obfuscation Problem, the problem of traveling from some start location to an end location while "covering" a specific transit point that needs to be concealed from adversaries. We propose the notion of transit anonymity, a quantitative guarantee of the anonymity of a specific transit point, even with a powerful adversary with full knowledge of the path planning algorithm.
We propose and evaluate planning/search algorithms that satisfy this anonymity criterion. 
\end{abstract}
\keywords{obfuscation, deceptive planning, planning}
\newcommand{\BibTeX}{\rm B\kern-.05em{\sc i\kern-.025em b}\kern-.08em\TeX}
\begin{document}


\pagestyle{fancy}
\fancyhead{}


\maketitle 


\section{Introduction}

In applications such as sensitive cargo transportation or surveillance, it is sometimes necessary to route an agent from a start point to a goal while concealing the location of a {\it transit point}, which is either on the route or visible from the route, from adversaries. 
For example, in a cargo transport application, if a depot or drop location is located somewhere on the route, it is essential to prevent potential adversaries from deducing the transit point location to minimize the risk of theft or interception. In a surveillance application, it is important to be able to conceal which specific location a surveillance agent is targeting on its route.

Obfuscating an agent's true intention has been previously studied in various fields, including path planning, robotics, and game theory~\cite{chakraborti2019explicability}. Previous work has primarily focused on the Goal Obfuscation Problem, which aims to prevent observers from deducing the agent's actual goal, and has numerous applications,  e.g., creating realistic non-player characters (NPCs) capable of deceiving humans~\cite{dias2013great}, or ensuring secure escorting of a VIP to a hidden location~\cite{keren2016privacy}.

Concealing non-goal locations is also important for customer privacy protection in real-world situations. For example, \citet{9851770} considers a scenario where an UAV delivers packages to a private location and returns to the starting point. \citet{chen2012differentially} notes that some public transportation systems track the stations where each customer boards and leaves, and those systems potentially reveal the location of homes or workplaces, which are the transit points on the round-trip paths. 
However, \citet{9851770} is limited to path planning in two-dimensional coordinates with the constraint that the start and goal locations are the same, and \citet{chen2012differentially} focuses on not path planning but anonymizing the collected sequential data.

In this paper, we study the \textbf{Transit Obfuscation Problem},  where given a graph, start location, end location, a target transit point, and a visibility function for an agent, the task is to generate a route from the start to the end location such that the agent's visibility function covers the target, but the target location is concealed from adversaries. 
We assume a strong adversary that has full knowledge of the agent's path, as well as full knowledge of the domain as well the internal decision-making process of the agent (i.e., the adversary has full access to the agent's code). 

We introduce the notion of $(k, \ell, m)$-Anonymity, which quantifies the level of concealment achieved by a path planner. If a path planner satisfies $(k, \ell, m)$-Anonymity, there exist at least $k$ transit points resulting in the same path up to the first $m$ steps, while the deviation among these candidate points is $\geq \ell$. Thus, $(k, \ell, \infty)$-Anonymity is a guarantee that even with full knowledge of the agent's path and code, an adversary can not distinguish the true target transit point among $k$ possible candidates which are at least $\ell$ apart from each other.
We analyze some theoretical properties of  $(k, \ell, m)$-Anonymity and propose a {\it graph partitioning-based} approach to generating paths that guarantee $(k, \ell, m)$-Anonymity. 

The rest of the paper is structured as follows.
First, we define the Transit Obfuscation Problem (TOP) with respect to a path-planning problem with a transit point and visibility constraints (Section \ref{sec:TOP})
Next, in Section \ref{sec:TA}, we define ($k,\ell,m)$-Anonymity for the TOP and analyze its theoretical properties. We also define some metrics for evaluating the tradeoffs between privacy and path costs for the TOP.
Then, in Section \ref{sec:partitioning}, we propose a partitioning-based search algorithm for the TOP which guarantees anonymity even when when the adversary knows the complete path, i.e., $(k,\ell,\infty)$-Anonymity.
Section \ref{sec:m-bounded} proposes algorithms which guarantees anonymity for up to $m < \infty$ steps. In Section \ref{sec:evaluation}, we experimentally evaluate our search algorithms on some standard  benchmark game map instances.
Section \ref{sec:related-work} discusses related work.
Section \ref{sec:conclusion} concludes with a discussion and directions for future work. Our code is available at: \url{https://github.com/Koukyosyumei/TOP}.

\section{Transit Obfuscation Problem} 
\label{sec:TOP}

A \textbf{path-planning domain with visibility constraints} is denoted by a triple $\mathcal{D} = \langle \mathcal{N}, \mathcal{E}, \mathscr{T}, c, v \rangle$, where

\begin{itemize}
    \item $\mathcal{N}$ is a non-empty set of nodes;
    \item $\mathscr{T} \subseteq \mathcal{N}$ is a set of transit candidates;
    \item $\mathcal{E} \subseteq \mathcal{N} \times \mathcal{N}$ is a set of edges between nodes;
    \item $c: \mathcal{E} \rightarrow R^{+}_{0}$ is a function that returns the non-negative cost of an edge between two nodes.
    \item $v: \mathcal{N} \rightarrow 2^{\mathcal{N}}$ is a visibility function that returns the set of visible nodes from a given node.
\end{itemize} 

 The cost of the shortest path (a.k.a minimum cost path) between node $a \in \mathcal{N}$ and node $b \in \mathcal{N}$ is denoted by $d(a, b)$. For simplicity, we assume that $E$ does not contain self-loop edges, i.e., $d(a, b) = \infty$ if $a = b$. A \textbf{path} $\pi$ in a path-planning domain $\mathcal{D}$ is a sequence of nodes $\pi = n_{1}, n_{2}, ..., n_{|\pi|}$ such that $\forall{i \in \{1, ..., |\pi| -1 \}} \hquad (n_{i}, n_{i+1}) \in \mathcal{E}$, where $|\pi|$ represents the length of $\pi$. We also denote the subsequence of $\pi$ till $m$-th node as $\pi|_{m}$, i.e., $\pi|_{m} = n_{1}, n_{2}, ..., n_{m}$. For convenience, we assume that $\pi|_{m} = \pi$ if $m > |\pi|$. The binary operator $\circ$ represents the concatenation of two paths. Specifically, when $\pi_a = a_1, a_2, ..., a_{|\pi_a|}$, $\pi_b = b_1, b_2, ..., b_{|\pi_b|}$, and $a_{|\pi_a|} = b_1$, we have that $\pi = \pi_a \circ \pi_b = a_1, a_2, ..., a_{|\pi_a|}, b_2, ..., b_{|\pi_b|}$. We also introduce $\concat$ notation, where $\concat^{x}_{i=1} \pi_{i} = \pi_{1} \circ \pi_{2} \circ ... \circ \pi_{x}$. The cost of $\pi$ is the sum of the costs of each edge in $\pi$, given by $cost(\pi) = \sum^{|\pi|}_{i = 2} c(\pi_{i-1}, \pi_{i})$, where $\pi_{i}$ is the $i$-th node in $\pi$. We say that $\pi$ \textit{covers} node $n \in \mathcal{N}$ if there exists an index $i$ such that $n \in v(\pi_{i})$. 

A \textbf{path-planning problem with visibility constraints and a transit point (PPVT)} is represented by a tuple $\langle\mathcal{D}, s, g, t \rangle$, where 
 $\mathcal{D} = \langle \mathcal{N}, \mathscr{T}, \mathcal{E}, c, v \rangle$ is a domain, 
 $s \in \mathcal{N}$ is the start node, 
 $g \in \mathcal{N}$ is the goal node,
 $t \in \mathscr{T}$ is the transit point that must be covered.
 The solution to a PPVT is a path $\pi$ such that $\pi_{1} = s$, $\pi_{|\pi|} = g$, and $\exists{i \in \{1, 2, ..., |\pi|\}, \hquad t \in v(\pi_{i})}$. 

A \textbf{path planner} $\mathcal{A}$ takes as input a PPVT and 
returns a feasible path $\pi$ for that problem, i.e., $\mathcal{A}(\langle \mathcal{D}, s, g, t \rangle) = \pi$. 
$\mathcal{A}$ returns Failure when it cannot find a feasible path. For convenience,  we define Failure such that it is not equal to itself, i.e., Failure $\neq$ Failure.


We assume that there is an adversary who seeks to deduce the actual transit point $t \in \mathscr{T}$ by observing the trajectory of the agent. 

A \textbf{Transit Obfuscation Problem} (TOP) is a tuple $\langle \mathcal{D}, s, g, \mathscr{O} \rangle$, where $\mathscr{O}$ describes what the adversary can observe. 

We make the following assumptions about the abilities of the adversarial observer (similar to the set of assumptions by ~\cite{kulkarni2018resource}):

\begin{itemize}
    
    \item \textbf{Complete Knowledge about the Domain and Transit Candidates}: The adversary has complete knowledge about the domain $\mathcal{D}$.
    \item \textbf{Full Access to the Planner}: The adversary has full access to and thoroughly understands the agent's planning algorithm.
    \item \textbf{Independence of Inputs}: The adversary can execute the agent's planner with arbitrary input tuples at any time.
    \item \textbf{Observability of Path}: The adversary can immediately observe the path executed by the agent so far.
    \item \textbf{Semi-Honest Adversary}: The adversary is passive, and it does not disturb the action of the agent or gain any additional information beyond what has been specified above.
\end{itemize}

These are challenging assumptions when trying to conceal the transit point, as the adversary has full information about the mechanism of the path planning algorithm, as well as the ability to rerun/simulate the algorithm many times in order to gain information that might reveal the transit point. When $t = g$, and $v$ is the identity function (the only node visible from a node is itself), this special case of the TOP is similar to the Goal Obfuscation Problem~\cite{kulkarni2018resource, kulkarni2019unified}. 
Unlike the Goal Obfuscation Problem, where the final node of the path always reveals the actual goal,
in the TOP, when $t \neq g$, it is possible to have a  path where an adversary cannot deduce the actual transit point even after observing the entire trajectory.
For example, in Fig.~\ref{fig:intuition}, an agent travels from $s$ to $g$ while covering one of $\mathscr{T} = \{t_1, t_2, t_3, t_4\}$, where black cells are obstacles. Then, if the agent can see nodes within a radius of one, any feasible path covers all of $\mathscr{T}$ so that an observer cannot infer the true transit point.

\begin{figure}
    \centering
    \includegraphics[width=\linewidth]{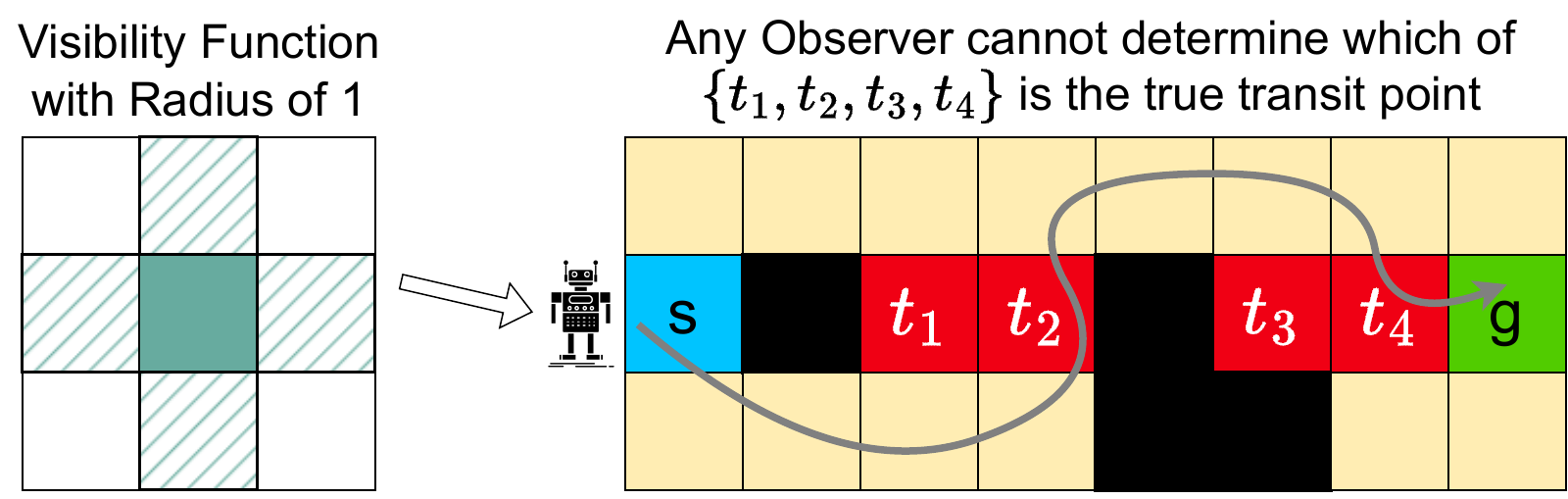}
    \caption{Let nodes within a radius of one be visible. Then, it is not possible for an observer to determine which of ${t_1, t_2, t_3, t_4}$ is the true transit point.}
    \label{fig:intuition}
\end{figure}

\section{$(k, \ell, m)$-Anonymity}
\label{sec:TA}

Next, we formally define the conditions when a path is anonymized for a transit point, what inputs are anonymizable, and what planners can achieve anonymization.

\subsection{Definitions of $(k, \ell, m)$-Anonymity}

In order for a transit point $t$ to remain private up to the first $m$ steps, even if the adversary has the capabilities enumerated above, it must not be possible to uniquely identify $t$ by executing the planner (possibly many times) and observing the output(s) as well as the internal state of the planner.

This is possible if the route output by a planner is indistinguishable for multiple transit candidate points, including the actual transit point $t$. For example, if the set of transit candidates  
$\mathscr{T} = \{t_1, t_2\}$, and 
$\mathcal{A}(\langle \mathcal{D}, s, g, t_1 \rangle) = 
\mathcal{A}(\langle \mathcal{D}, s, g, t_2 \rangle) = 
\pi_{1,2}$, it is indeterminate which of $t_1$ or $t_2$ is the true transit point $t$.

In addition, it is often desirable for the transit candidates to be spread out. For example, if all the transit candidates are close to each other, the adversary may be able to cost-effectively block access to all transit candidates (preventing the agent from covering the transit point). 
Therefore, it is desirable to be able to disperse  the indistinguishable transit candidates in the search space.  

Based on this idea,  we first define $(k, \ell, m)$-Anonymized Paths.

\begin{definition}[$(k, \ell, m)$-Anonymized Path]
\label{def:transit-anonymized-path}
Let $k \in \mathbb Z_{> 0}$, $\ell \in \mathbb R_{\geq 0} $ and $m \in \mathbb Z_{> 0}$. We say that $\mathcal{A}(\langle \mathcal{D}, s, g, t \rangle)$, a path planned by $\mathcal{A}$ from $s$ to $g$ covering $t$, where $s \neq t$ and $g \neq t$,  is a $(k, \ell, m)$-Anonymized Path with respect to $t$ if there exists a set $T \subseteq \{ t' | t' \in \mathscr{T} \text{ and } \mathcal{A}(\langle \mathcal{D}, s, g, t \rangle)|_{m} = \mathcal{A}(\langle \mathcal{D}, s, g, t' \rangle)|_{m} \}$ satisfying $|T| \geq k$ and $\min_{(i, j) \in T \times T } d(i, j) \geq \ell$.
\end{definition}

\noindent In other words, a path is  $(k, \ell, m)$-Anonymized when there are at least $k$ transit candidates that result in the same path up to the first $m$ nodes from $s$ to $g$ covering $t$, and the distance between any pair of nodes within that set is equal to or greater than $\ell$. When $m = \infty$, the adversary cannot determine which of those transit candidates is the true $t$ even after observing the entire path. 


Second, we define a $(k, \ell, m)$-Anonymizable Tuple. 

\begin{definition}[$(k, \ell, m)$-Anonymizable Tuple]
Let $k \in \mathbb Z_{> 0}$ and $\ell \in \mathbb R_{\geq 0} $. Given a domain $\mathcal{D}$, we say that the tuple $\langle \mathcal{D}, s, g, t \rangle$, where $s \in \mathcal{N}$, $g \in \mathcal{N}$, $t \in \mathscr{T}$, $s \neq t$, and $g \neq t$, is a $(k, \ell, m)$-Anonymizable Tuple if there exists a path planner $\mathcal{A}$ such that there exists a set $T \subseteq \{ t' | t' \in \mathscr{T} \text{ and } \mathcal{A}(\langle \mathcal{D}, s, g, t \rangle)|_{m} = \mathcal{A}(\langle \mathcal{D}, s, g, t' \rangle)|_{m} \}$ satisfying $|T| \geq k$ and $\min_{(i, j) \in T \times T} d(i, j) \geq \ell$.
\end{definition}

\noindent The input tuple is $(k, \ell, m)$-Anonymizable when at least one planner can output a $(k, \ell, m)$-Anonymized Path for this input. Since $k$ is a positive integer, a tuple $\langle \mathcal{D}, s, g, t \rangle$ is not $(k, \ell, m)$-Anonymizable Tuple for any $k$ if there exists no path from $s$ to $g$ covering $t$. 

Based on the above definitions, we define $(k, \ell, m)$-Anonymity of a path planner $\mathcal{A}$ as follows.

\begin{definition}[$(k, \ell, m)$-Anonymity]
A path planner $\mathcal{A}$ satisfies $(k, \ell, m)$-Anonymity for a domain $\mathcal{D}$ if $\mathcal{A}$ returns a $(k, \ell, m)$-Anonymized Path for all $(k, \ell, m)$-Anonymizable Tuples in $\mathcal{D}$.
\end{definition}

\noindent A path planner with $(k, \ell, m)$-Anonymity is guaranteed to return anonymized paths for all anonymizable tuples in $\mathcal{D}$.


We also consider the anonymity of the planner for fixed source and goal locations and define $(k, \ell, m, \delta)$-Local Anonymity as follows:

\begin{definition}[$(k, \ell, m, \delta)$-Local Anonymity]
Let $x$ be the number of $(k, \ell, m)$-Anonymizable Tuples in the domain $\mathcal{D}$ with a fixed source $s$ and goal $g$. We say that $\mathcal{A}$ satisfies $(k, \ell, m, \delta)$-Local Anonymity for 
$\langle \mathcal{D}, s, g \rangle$
if $\mathcal{A}$ returns $(k, \ell, m)$-Anonymized Paths for $\delta x$ or more $(k, \ell, m)$-Anonymizable Tuples in $\mathcal{D}$ with $s$ and $g$.
\end{definition}

\noindent A planner $\mathcal{A}$ satisfying $(k, \ell, m)$-Anonymity in $\mathcal{D}$ satisfies $(k, \ell, m, 1)$-Local Anonymity for any combination of a start $s$ and a goal $g$. 

\subsection{Properties of $(k, \ell, m)$-Anonymity}

We have identified several important properties about $(k, \ell, m)$ - Anonymity: 
equivalence conditions for $(k, \ell, m)$ - Anonymizable Tuples and Anonymized Paths, path-extensibility, and existence guarantee of a planner achieving $(k, \ell, m)$-Anonymity. All omitted proofs, as well as some additional properties can be found in Supp.~\ref{supp:proof:sec3}~\cite{takahashi2024top-supp}.

First, the following proposition indicates the necessary and sufficient conditions for a path to be a $(k, \ell, \infty)$-Anonymized Path.

\begin{proposition}[3C Condition for Output Path]
\label{thm:pathcoverage-path}
A path $\pi$ = $\mathcal{A}(\langle \mathcal{D}, s, g, t \rangle)$ is $(k, \ell, \infty)$-Anonymized Path iff there exists a set of nodes $T \subseteq \mathscr{T}$ satisfying all of the following:
    \begin{enumerate}
        \item Cardinality: $|T| \geq k$
        \item Cost: $\min_{(i, j ) \in T \times T } d(i, j) \geq \ell$ 
        \item Coverage: $\pi$ covers all nodes in $T$, and $\mathcal{A}$ returns $\pi$ whenever the transit point belongs to $T$.
    \end{enumerate}
\end{proposition}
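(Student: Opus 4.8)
The plan is to recognize that the three numbered conditions are nothing more than a repackaging of Definition~\ref{def:transit-anonymized-path} specialized to $m = \infty$, with the single extra ingredient that the feasibility guarantee built into the notion of a path planner forces coverage. I would prove both implications of the biconditional using the \emph{same} witness set $T$, so that each direction is just a translation between the membership predicate of the definition and condition~3.

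For the forward direction ($\Rightarrow$), suppose $\pi = \mathcal{A}(\langle \mathcal{D}, s, g, t \rangle)$ is a $(k, \ell, \infty)$-Anonymized Path. By definition there is a set $T \subseteq \{ t' \in \mathscr{T} : \pi|_{\infty} = \mathcal{A}(\langle \mathcal{D}, s, g, t' \rangle)|_{\infty} \}$ with $|T| \geq k$ and $\min_{(i,j) \in T \times T} d(i,j) \geq \ell$; these are precisely conditions~1 and~2. The crucial observation is that since $\infty$ exceeds the length of any finite path, the prefix operator acts as the identity, so $\pi|_{\infty} = \pi$ and $\mathcal{A}(\langle \mathcal{D}, s, g, t' \rangle)|_{\infty} = \mathcal{A}(\langle \mathcal{D}, s, g, t' \rangle)$. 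The membership predicate therefore collapses to equality of the \emph{entire} paths, $\mathcal{A}(\langle \mathcal{D}, s, g, t' \rangle) = \pi$ for every $t' \in T$, which is exactly the assertion that $\mathcal{A}$ returns $\pi$ whenever the transit point lies in $T$. To obtain the coverage half of condition~3, I would invoke the definition of a path planner: because $\pi$ is a genuine path (not Failure, which by convention is unequal even to itself), each equality $\mathcal{A}(\langle \mathcal{D}, s, g, t' \rangle) = \pi$ exhibits $\pi$ as a \emph{feasible} solution of the PPVT $\langle \mathcal{D}, s, g, t' \rangle$, and every feasible solution must cover its own transit point; hence $\pi$ covers each $t' \in T$.

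For the reverse direction ($\Leftarrow$), I would run this argument backward. Given $T$ satisfying conditions~1--3, condition~3 yields $\mathcal{A}(\langle \mathcal{D}, s, g, t' \rangle) = \pi = \mathcal{A}(\langle \mathcal{D}, s, g, t \rangle)$ for all $t' \in T$, and applying the identity prefix again gives $\mathcal{A}(\langle \mathcal{D}, s, g, t \rangle)|_{\infty} = \mathcal{A}(\langle \mathcal{D}, s, g, t' \rangle)|_{\infty}$, so $T$ sits inside the candidate set of Definition~\ref{def:transit-anonymized-path}. Conditions~1 and~2 then supply the cardinality $|T| \geq k$ and the pairwise-distance bound, which is all the definition requires; thus $\pi$ is a $(k, \ell, \infty)$-Anonymized Path. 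Note that the coverage clause of condition~3 is not actually consumed in this direction, since it is already implied by feasibility; it appears in the proposition only to make the characterization self-contained.

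I expect no deep obstacle, since the statement is essentially bookkeeping against the definitions. The only point that needs genuine care is the treatment of $m = \infty$: I must argue cleanly that $\pi|_{\infty} = \pi$ turns prefix-agreement into full-path equality (lengths included, so that two paths sharing a common prefix but differing in length are \emph{not} identified), and that the Failure convention rules out degenerate $t'$ and guarantees that any candidate mapping to $\pi$ produces a coverage-providing feasible path.
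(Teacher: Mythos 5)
Your proof is correct and is essentially the argument the paper relies on (relegated to the supplement): with $m=\infty$ the prefix condition of Definition~\ref{def:transit-anonymized-path} collapses to full-path equality, conditions~1 and~2 transcribe the cardinality and pairwise-distance requirements, and the coverage clause follows from the feasibility guarantee of a path planner together with the Failure~$\neq$~Failure convention. Your remark that the coverage half of condition~3 is redundant in the reverse direction is also accurate.
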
 

The similar necessary and sufficient conditions for an input tuple to be $(k, \ell, \infty)$-Anonymizable are in Supp.~\ref{supp:additional-properties}~\cite{takahashi2024top-supp}.

Next, the coverage condition of Prop.~\ref{thm:pathcoverage-path} leads to the computational complexity of planning a $(k, \ell, \infty)$-Anonymized Path.

\begin{theorem}[ Complexity]
\label{thm:np}
Finding a $(k, \ell, m)$-Anonymized Path for the given tuple is NP-Hard.
\end{theorem}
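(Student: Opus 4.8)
The plan is to prove NP-hardness by a polynomial-time reduction from Independent Set, concentrating on the case $m=\infty$ (which suffices, since hardness for one setting of $m$ makes the general search problem NP-hard). The leverage comes from Prop.~\ref{thm:pathcoverage-path}: it reduces the question ``does a $(k,\ell,\infty)$-Anonymized Path exist?'' to ``does there exist $T\subseteq\mathscr{T}$ satisfying $|T|\ge k$ (Cardinality), $\min_{(i,j)\in T\times T} d(i,j)\ge\ell$ (Cost), and a single feasible $s$-to-$g$ path $\pi$ covering all of $T$ (Coverage)?'' The strategy is to engineer an instance in which Coverage is automatically satisfied, so that the surviving Cardinality and Cost conditions are \emph{exactly} an independent-set constraint.

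Given an Independent Set instance $\langle G=(V,E),k\rangle$, I would build a domain $\mathcal{D}$ as follows. Introduce one transit candidate $v_u\in\mathscr{T}$ per vertex $u\in V$, and for each edge $(u,w)\in E$ add a graph edge $(v_u,v_w)\in\mathcal{E}$ of cost $1$. Setting $\ell=2$, adjacent candidates have $d(v_u,v_w)=1<\ell$, while any non-adjacent pair is joined only by paths of at least two edges, giving $d\ge 2=\ell$; hence every $T\subseteq\mathscr{T}$ with $\min_{(i,j)\in T\times T} d(i,j)\ge\ell$ is precisely an independent set of $G$, and the Cardinality condition asks for one of size $k$. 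To discharge Coverage, I would adjoin three auxiliary nodes $s,h,g$ with edges $(s,h),(h,g)$, keep these disconnected from the candidate nodes, and define the visibility function so that $v(h)=\mathscr{T}$ while every other node sees only itself. Then $\pi=s,h,g$ is a feasible path from $s$ to $g$ that covers every candidate (including $t$, taken to be any fixed $v_u$), so a planner may legitimately return $\pi$ for \emph{every} transit point in $\mathscr{T}$; thus the Coverage condition of Prop.~\ref{thm:pathcoverage-path} holds for any candidate set $T$.

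Correctness is then immediate: a $(k,2,\infty)$-Anonymized Path exists iff some $T\subseteq\mathscr{T}$ meets all three conditions, iff $G$ has an independent set of size $k$. The construction is polynomial, and all constructed paths have length $3$, so the same argument also goes through for any finite $m\ge 3$ (where $\pi|_{m}=\pi$). Since Independent Set is NP-hard, finding a $(k,\ell,m)$-Anonymized Path is NP-hard.

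I expect the main obstacle to be the tension between the Coverage and Cost conditions. The natural way to make Coverage free is to route the agent past a hub adjacent to every candidate, but if that hub were joined to the candidates by ordinary low-cost edges it would place all candidates within shortest-path distance $2$ of one another, collapsing the metric and erasing the adjacency encoding. The idea that resolves this is that the visibility function $v$ is specified independently of $\mathcal{E}$ and $c$: I let the hub \emph{see} all candidates without being \emph{connected} to them, decoupling coverage from the shortest-path metric that encodes $G$. A minor subtlety is whether the true transit point $t$ must itself lie in $T$; Definition~\ref{def:transit-anonymized-path} only requires $T$ to be \emph{some} qualifying subset, so this is not needed, but if one insists on $t\in T$ it can be arranged by making $t$ an isolated candidate (distance $\infty$ to all others), which adjoins to any dispersed set and merely lowers the target size to $k-1$.
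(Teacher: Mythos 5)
Your reduction is correct, but it is a genuinely different argument from the paper's. The paper proves Theorem~\ref{thm:np} in one line by appealing to the Coverage condition of Prop.~\ref{thm:pathcoverage-path}: finding a (minimum-cost) path that covers a prescribed set of nodes generalizes the Watchman Route Problem, whose NP-hardness is imported from prior work. The hardness there is thus attributed entirely to the covering/path-cost aspect, and the dispersion parameter $\ell$ plays no role. You instead neutralize Coverage completely (a hub node that \emph{sees} all of $\mathscr{T}$ without being \emph{connected} to it, so the constant planner returning $s,h,g$ is feasible for every transit point) and locate the hardness in the interaction of the Cardinality and Cost conditions, which under $\ell=2$ and unit edge costs become exactly Independent Set. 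This buys several things the paper's argument does not: it is self-contained rather than citation-dependent; it establishes hardness of \emph{deciding existence} of a $(k,\ell,m)$-Anonymized Path (the paper's appeal to WRP really concerns the optimization subproblem, since merely finding \emph{some} covering path in a connected undirected graph is easy, and the definition of an Anonymized Path imposes no optimality requirement); and it shows the problem remains hard even when path costs are irrelevant. The trade-off is that your construction leans on the freedom to choose $v$ arbitrarily and independently of $\mathcal{E}$, so it says nothing about restricted but natural instance classes (identity or radius-$r$ visibility on grids), which is precisely where the WRP-based argument applies; the two reductions are therefore complementary, isolating two independent sources of intractability ($\ell\ge 2$ with trivial coverage, versus $\ell=0$ with nontrivial coverage).
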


\begin{proof}[Proof of Theorem~\ref{thm:np}]
Finding a path that covers all nodes in the given set of nodes is a generalization of WRP, which is NP-Hard~\cite{seiref2020solving} (see Sec.~\ref{subsec:wrpt})
\end{proof}

If the domain consists of an undirected graph, we can ensure the existence of a path planner that satisfies $(k, \ell, m)$-Anonymity.

\begin{theorem}[Existence of a Satisfying Path Planner]
\label{thm:existence}
If every edge in the domain $\mathcal{D}$ is undirected, meaning that $\forall (i, j) \in \mathcal{N} \times \mathcal{N}, \hquad (i, j) \in \mathcal{E} \Rightarrow (j, i) \in \mathcal{E}$, there exists a path planner $\mathcal{A}$ that satisfies $(k, \ell, m)$-Anonymity for any given $k$, $\ell$ and $m$.
\end{theorem}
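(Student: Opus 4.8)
The plan is to exhibit one explicit planner $\mathcal{A}^{*}$ that ignores the specific transit point and, for each pair $(s,g)$, always returns a single canonical path $\Pi_{s,g}$ covering every transit candidate reachable from $s$. The point of making the output independent of $t$ is that it forces collisions: for a fixed $(s,g)$, every feasible input tuple is mapped to the \emph{same} path, so the set $\{t' : \mathcal{A}^{*}(\langle\mathcal{D},s,g,t\rangle)|_m = \mathcal{A}^{*}(\langle\mathcal{D},s,g,t'\rangle)|_m\}$ appearing in Definition~\ref{def:transit-anonymized-path} is as large as possible. This sidesteps the delicate consistency issue, namely that a planner must return the same path for \emph{every} member of the anonymizing set $T$, which makes any ``choose the best group for $t$'' strategy hard to keep coherent across inputs when candidate groups overlap.

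First I would construct $\Pi_{s,g}$. If $s$ and $g$ lie in different connected components, no feasible path exists and $\mathcal{A}^{*}$ returns Failure. Otherwise, work inside the component $C$ containing $s$ and $g$: take a spanning tree of $C$ and traverse its Euler tour from $s$ (each tree edge is used in both directions, which is legal precisely because $(i,j)\in\mathcal{E}\Rightarrow(j,i)\in\mathcal{E}$), then append any $s$-to-$g$ path. The result $\Pi_{s,g}$ is a walk from $s$ to $g$ visiting every node of $C$; in particular it passes through a covering node of every candidate coverable from $C$ and therefore covers all of them. Since paths here are walks (nodes may repeat), this detour-and-return construction is admissible. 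This is the single place where the theorem genuinely uses undirectedness: in a directed graph one cannot guarantee one $s$-$g$ walk that returns from each detour, so an all-covering path need not exist at all.

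Then I would verify that $\mathcal{A}^{*}$ satisfies $(k,\ell,m)$-Anonymity. Fix any $(k,\ell,m)$-Anonymizable tuple $\langle\mathcal{D},s,g,t\rangle$. Unfolding the definition yields a witness planner and a set $T$ with $|T|\geq k$ and $\min_{(i,j)\in T\times T} d(i,j)\geq\ell$ whose members all agree with the output on $t$ up to step $m$. Because Failure $\neq$ Failure, membership of any $t'$ in this collision set forces the witness to return a genuine feasible path for $\langle\mathcal{D},s,g,t'\rangle$, so each $t'\in T$ is coverable and its covering node lies in the shared component $C$. Hence $\Pi_{s,g}$ covers every $t'\in T$, so $\mathcal{A}^{*}$ returns $\Pi_{s,g}$ on each input $\langle\mathcal{D},s,g,t'\rangle$ with $t'\in T$, and also on $\langle\mathcal{D},s,g,t\rangle$. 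Consequently $T$ is contained in $\{t' : \mathcal{A}^{*}(\langle\mathcal{D},s,g,t\rangle)|_m = \mathcal{A}^{*}(\langle\mathcal{D},s,g,t'\rangle)|_m\}$, which certifies the returned path as a $(k,\ell,m)$-Anonymized Path; as $t$ was arbitrary, $\mathcal{A}^{*}$ meets the definition.

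The main obstacle is the global consistency requirement rather than any single feasibility step: the anonymity definition constrains the planner's behaviour on many inputs simultaneously, so a per-tuple construction need not assemble into one coherent planner. Collapsing all transit points (for a fixed $(s,g)$) onto the common output $\Pi_{s,g}$ resolves this at a stroke and handles every $m$ at once, since identical full paths trivially share their first $m$ nodes; this makes the argument cover the $m=\infty$ case (consistent with the coverage clause of Proposition~\ref{thm:pathcoverage-path}) and every finite $m$ uniformly. The only routine point left to check is finiteness of the Euler-tour walk, which holds because the relevant component is finite.
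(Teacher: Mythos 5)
Your proof is correct, and it shares the paper's central insight --- for a fixed $(s,g)$, make the planner's output completely independent of $t$ so that the collision set in Definition~\ref{def:transit-anonymized-path} is as large as possible --- but it realizes that insight by a genuinely different construction. The paper builds its canonical path by collecting, for each anonymizable $t_i$, a witness anonymized path $\hat{\pi}^{t_i}_{s\to g}$, and concatenating all of them with return trips $\pi_{g\to s}$ (whose existence is where undirectedness enters); it then invokes the Path-Extension Lemma (Lemma~\ref{lem:path-extension}) to argue that each constituent path's anonymity survives the pre- and post-concatenation. You instead build one component-spanning Euler-tour walk, observe that it covers every coverable candidate, and verify anonymity directly from the witness set $T$ supplied by the anonymizability definition, noting that $T$'s cardinality and pairwise-distance properties are intrinsic to $T$ and transfer to any planner whose collision set contains $T$. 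Your route is more self-contained (no appeal to Lemma~\ref{lem:path-extension}) and makes the role of undirectedness more transparent (tree edges traversable in both directions, rather than the existence of $\pi_{g\to s}$); the paper's route reuses its extension lemma, which it needs elsewhere anyway, and produces a path built only from paths the witness planners would actually emit rather than a full sweep of the component. One presentational nit: you should state explicitly that $\mathcal{A}^{*}$ returns Failure when $t$ is not covered by $\Pi_{s,g}$ (so that it remains a valid PPVT planner on infeasible inputs); as you essentially note, such tuples are never anonymizable, so this does not affect the anonymity claim.
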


To prove this, we use the Lemma below which states that extending a $(k, \ell, m)$-Anonymized Path preserves the same level of anonymity.

\begin{lemma}[Path-Extension]
\label{lem:path-extension}
Let $\pi = A(\langle \mathcal{D}, s, g, t \rangle)$ be a $(k, \ell, m)$-Anonymized Path with respect to $t$, $\pi_{s' \to s}$ be a path, constructed independently of $t$, from $s'$ to $s$, and $\pi_{g \to g'}$ be an arbitrary path from $g$ to $g'$. Then, $\pi_{s' \to s} \circ \pi \circ \pi_{g \to g'}$ is also a $(k, \ell, m)$-Anonymized Path with respect to $t$.
\end{lemma}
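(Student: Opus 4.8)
The plan is to prove anonymity of the extended path by exhibiting an explicit \emph{derived planner} and reusing the witness set of the original path. Concretely, fix the prefix $\pi_{s'\to s}$ and the suffix $\pi_{g\to g'}$ from the statement and define a planner $\mathcal{A}'$ by $\mathcal{A}'(\langle \mathcal{D}, s', g', t'\rangle) = \pi_{s'\to s}\circ \mathcal{A}(\langle \mathcal{D}, s, g, t'\rangle)\circ \pi_{g\to g'}$ for every candidate $t'$ (its behaviour on other inputs is irrelevant, e.g.\ it may fall back to any feasible planner there). Because $\pi_{s'\to s}$ is constructed independently of the transit point, $\mathcal{A}'$ prepends the \emph{same} prefix and appends the \emph{same} fixed suffix regardless of $t'$. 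First I would check that $\mathcal{A}'(\langle \mathcal{D}, s', g', t\rangle)$ is exactly the claimed extended path and that it is feasible for $\langle \mathcal{D}, s', g', t\rangle$: it begins at $s'$, ends at $g'$ (both distinct from $t$, as the definition requires), and covers $t$ because its middle segment $\pi$ already covers $t$.

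Next I would carry the witness set across unchanged. Let $T$ be the set guaranteed by Definition~\ref{def:transit-anonymized-path} for $\pi$, so $|T|\ge k$ and $\min_{(i,j)\in T\times T} d(i,j)\ge \ell$. Since $T$ and the domain distance $d$ are untouched, the cardinality and cost conditions hold verbatim for $\mathcal{A}'$. The only condition requiring argument is membership: I must show $\mathcal{A}'(\langle \mathcal{D}, s', g', t\rangle)\vert_{m} = \mathcal{A}'(\langle \mathcal{D}, s', g', t'\rangle)\vert_{m}$ for every $t'\in T$, given the hypothesis $\mathcal{A}(\langle \mathcal{D}, s, g, t\rangle)\vert_{m}=\mathcal{A}(\langle \mathcal{D}, s, g, t'\rangle)\vert_{m}$. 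This reduces to a purely combinatorial claim about concatenation: if $\rho\vert_{m}=\rho'\vert_{m}$, then $(\alpha\circ\rho\circ\sigma)\vert_{m}=(\alpha\circ\rho'\circ\sigma)\vert_{m}$ for any fixed prefix $\alpha$ and suffix $\sigma$.

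I would prove this claim by a case split on $p=|\alpha|$. If $m\le p$, both truncations equal $\alpha\vert_{m}$ and the inner paths never surface. If $m>p$ and the inner path is long enough that the first $m$ nodes stay within $\alpha$ and $\rho$ (i.e.\ $|\rho|\ge m-p+1$), then the first $m$ nodes use only $\rho\vert_{m-p+1}$, which is a prefix of the shared $\rho\vert_{m}$; hence the suffix is irrelevant and the two truncations coincide.

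The main obstacle is the remaining case, where the inner path is so short that the first $m$ nodes of the extended path spill over into $\sigma$. Here I would lean on the convention $\pi\vert_{m}=\pi$ for $m>|\pi|$: when $|\rho|<m-p+1\le m$, the hypothesis $\rho\vert_{m}=\rho'\vert_{m}$ forces $\rho=\rho'$ as full sequences (both truncations are the entire paths), so $\alpha\circ\rho\circ\sigma$ and $\alpha\circ\rho'\circ\sigma$ are identical and agree trivially on their first $m$ nodes. This case is exactly where the suffix must be the same fixed path for all candidates, which also clarifies the asymmetry in the hypotheses: the prefix must be independent of $t$ (otherwise the leading $m$ nodes could already diverge), whereas a single arbitrary suffix suffices. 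Combining the cases establishes the membership condition, and together with the inherited cardinality and cost conditions it shows that the extended path is $(k,\ell,m)$-Anonymized with respect to $t$.
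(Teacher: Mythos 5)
Your proof is correct and follows essentially the same route as the paper's: reuse the witness set $T$ from the original path and observe that prepending a transit-independent prefix and appending a single fixed suffix preserves agreement of the first $m$ nodes, so the cardinality and cost conditions carry over untouched. Your explicit construction of the derived planner $\mathcal{A}'$ and the three-way case split on where the $m$-th node falls (within the prefix, within the middle segment, or spilling over into the suffix) are careful elaborations of the same underlying argument.
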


\begin{proof}[Proof of Theorem~\ref{thm:existence}]

Let $\hat{T}_{s, g} = \{t \mid t \in \mathscr{T}, \langle D, s, g, t \rangle$ is $(k, \ell, m)$ -Anonymizable Tuple$\} = \{t_1, t_2, ..., t_x\}$, and $\hat{\pi}^{t_i}_{s \to g}$ be the $(k, \ell, m)$-Anonymized Path for $\langle D, s, g, t_i \rangle$. If $|\hat{T}_{s, g}| \geq 1$ and all edges in $\mathcal{D}$ are undirected, there exists $\pi_{g \to s}$, a path from $g$ to $s$. By Lemma~\ref{lem:path-extension}, we have that $\hat{\pi}_{s \to g} = (\concat^{x-1}_{i=1} \hat{\pi}^{t_i}_{s \to g} \circ \pi_{g \to s}) \circ \hat{\pi}^{t_x}_{s \to g}$ is a $(k, \ell, m)$-Anonymized Path with respect to all transit nodes in $\hat{T}_{s, g}$.

Now, consider a path planner $\mathcal{A}$ such that $\mathcal{A}(\langle D, s, g, t \rangle) = \hat{\pi}_{s \to g}$ if $|\hat{T}_{s, g}| \geq 1$ and $\mathcal{A}(\langle D, s, g, t \rangle) =$ Failure otherwise. It is evident that $\mathcal{A}$ satisfies $(k, \ell, m)$-Anonymity:
\end{proof}

Although Theorem.~\ref{thm:existence} guarantees that there exists a path planner with $(k, \ell, m)$-Anonymity for any undirected graph, such a guarantee is not possible when the edges are directed. 
A counterexample is shown in the left-side of Fig.~\ref{fig:failure}, where there are two possible paths from $s$ to $g$; $\pi_a = (s,t_1,t_2,t_3,g)$ or $\pi_b = (s,t_1,t_4,t_5,g)$. Thus, if we assume that the costs of edges are all one, all input tuples are $(3, 1, \infty)$-Anonymizable Tuples. Let $\pi_{i} = \mathcal{A}(\langle \mathcal{D}, s, g, t_i \rangle)$. 
Clearly, any path planner $\mathcal{A}$ should satisfy $\pi_a = \pi_2 = \pi_3$ and $\pi_b = \pi_4 = \pi_5$. Then, if $\pi_a = \pi_1$, $\pi_a$ is a $(3, 1, \infty)$-Anonymized Path but $\pi_b$ is not a $(3, 1, \infty)$-Anonymized Path. Likewise, if $\pi_b = \pi_1$, $\pi_b$ is a $(3, 1, \infty)$-Anonymized Path but $\pi_a$ is not a $(3, 1, \infty)$-Anonymized Path. On the other hand, if all edges are undirected, we can construct a $(3, 1, \infty)$-Anonymized Path from $\pi_{a}$ and $\pi_{b}$ by concatenating them.

\begin{figure}
    \centering
    \includegraphics[height=2.0cm]{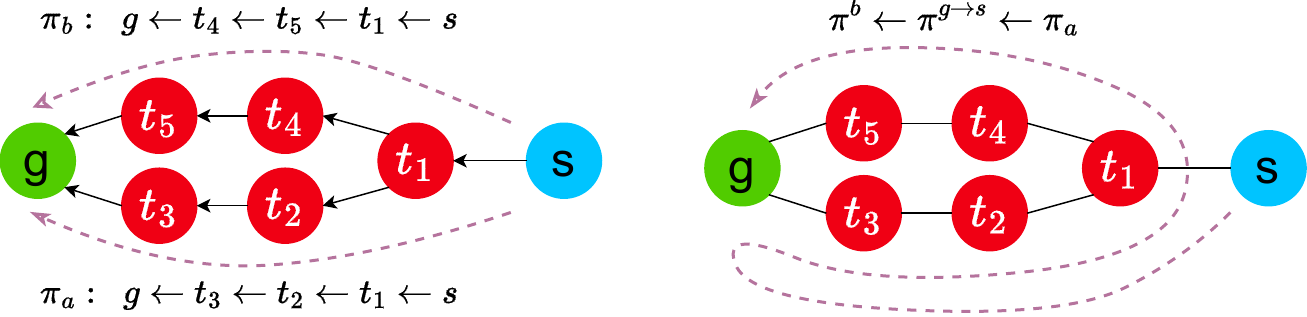}
    \caption{Directed vs. Undirected: While all tuples are $(3, 1, \infty)$-Anonymizable Tuples, planning $(3, 1, \infty)$-Anonymized Path for every transit point is impossible in the directed (left) case.}
    \label{fig:failure}
\end{figure}

\section{PbP: Partitioning-based Planner for $(k,\ell,\infty)$-Anonymity}
\label{sec:partitioning}

We now propose an algorithm for $(k,\ell,\infty)$-Anonymity. First, note that {\it if we disregard path cost}, then in principle, a relatively straightforward approach to achieve $(k,\ell,\infty)$-Anonymity would be to return some path which covers all transit candidates.
However, a practical algorithm for Transit Obfuscation needs to effectively trade off the privacy objective vs. path costs, on average, overall $(s,g,t)$ tuples of interest.
We first propose objectives that express this tradeoff and then propose a partitioning-based algorithm which seeks a solution which optimizes this objective. Proofs are in Supp.~\ref{supp:proof:sec4}~\cite{takahashi2024top-supp}.

\subsection{Objectives}

We define two metrics, Anonymized Path Ratio (APR) and Mean Anonymization Cost (MAC), to evaluate the performance of planners that satisfy $(k,\ell,m)$-Anonymity.
Let $\mathscr{T}_{+}$ be the set $\{t | t \in \mathscr{T}, \mathcal{A}(\langle \mathcal{D}, s, g, t\rangle) \text{ is a $(k, \ell, m)$-Anonymized Path}\}$ for the fixed $\mathcal{D}$, $s$, and $g$. APR is defined as follows:

\begin{definition}[Anonymized Path Ratio (APR)]

\begin{equation*}
    \hbox{APR}(\mathcal{A}, D, s, g) = \frac{|\mathscr{T}_{+}|}{\hbox{\#Coverable Transit Nodes}}
\end{equation*}

\end{definition} \noindent where \textit{\#Coverable Transit Nodes} denotes the number of transit nodes within $\mathscr{T}$ such that there exists a path from $s$ to $g$ covering $t \in \mathscr{T}$. APR is equivalent to the lower bound of $\delta$, and a larger APR is desirable. 

Next, inspired by deception cost~\cite{price2023domain}, a metric for goal obfuscation, we define the MAC metric: 

\begin{definition}[Mean Anonymization Cost (MAC)]
    \begin{equation*}
        \hbox{MAC}(\mathcal{A}, D, s, g) = \sum_{t \in \mathscr{T}_{+}} \frac{cost(\mathcal{A}(\langle \mathcal{D}, s, g, t \rangle)) - cost(\pi^{t*})}{ |\mathscr{T}_{+}| \hquad cost(\pi^{t*}))}
    \end{equation*}
\end{definition} \noindent where $\pi^{t*}$ is the shortest path from $s$ to $g$ covering $t$. MAC shows the cost of anonymizing the transit points.

\subsection{Pbp: Partitioning-based Planner}

\begin{algorithm}[!th]
\caption{Partitioning-based Planner (Pbp)}
\label{alg:1}
\begin{algorithmic}[1]

\Require Tuple $\langle \mathcal{D}, s, g, t \rangle$ and privacy parameters $(k, \ell)$
\Ensure a path $\pi$ from $s$ to $g$ covering $t$

\State /* Pre-Processing \textbf{independent of} $t$*/
\State Generate a partition of $\mathscr{T}$, $T_{1}, T_{2}, ..., T_{\Phi}$ such that $\bigcup^{\Phi}_{\phi=1} T_{\phi} = \mathscr{T}$, any $T_{\phi}$ except $T_{\Phi}$ meets all conditions of Prop.~\ref{thm:pathcoverage-path}, and any pair are disjoint.
\State /* End of Pre-Processing */

\If{$t$ belongs to $T_{\Phi}$}
\Return Failure
\EndIf

\State $T \leftarrow$ the partition that contains $t$
\State $\pi^{*}_{T} \leftarrow$ shortest path from $s$ to $g$ while covering $T$ \label{alg:1-call-wrpt}
\State 
\Return $\pi^{*}_{T}$
\end{algorithmic}
\end{algorithm}

We now propose the Partitioning-based Planner (Pbp),  a practical algorithm that seeks to achieve $(k, \ell, \infty)$-Anonymity while optimizing the above objectives.

Pbp is composed of two phases: (1) the Partitioning/Pre-Processing phase and (2) the path query phase.
In the Partitioning/Pre-Processing phase, 
the algorithm searches for a partition of all transit candidates, denoted as $T_1, T_2, ..., T_\Phi$, such that (a) each subset except the final $T_\Phi$ satisfies the 3C Conditions described in Prop.~\ref{thm:pathcoverage-path}, ensuring that each subset covers the required nodes to achieve the desired level of anonymity, and (b) the objectives are optimized. The set  $T_\Phi$ consists of transit candidates that the planner cannot anonymize.
This phase only needs to be executed once for each domain.

In the path query phase, given a specific $s$, $g$, and $t$, the algorithm finds the shortest path $\pi$ from the source node $s$ to the goal node $g$ while covering all the nodes assigned to the subset (computed above in the Partitioning phase) that includes the target node $t$. 

By utilizing this approach, Pbp can effectively plan a  $(k, \ell, \infty)$-Anonymized Path, ensuring the privacy requirements are met while efficiently navigating from the source to the destination. If the obtained partition is perfect, Pbp satisfies $(k, \ell, \infty)$-Anonymity.

\begin{theorem}[Completeness]
\label{thm:comp}
Let all edges in $\mathcal{D}$ be undirected. Then, if $\sum^{\Phi-1}_{\phi=1} |T_{\phi}|$ is maximized for any pair of $s$ and $g$, Alg.~\ref{alg:1} satisfies $(k, \ell, \infty)$-Anonymity.   
\end{theorem}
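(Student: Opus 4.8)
The plan is to reduce the theorem to a single combinatorial claim about the sum-maximizing partition, namely: under such a partition, a transit point $t$ lands in an anonymized block $T_{\phi}$ with $\phi < \Phi$ if and only if $\langle \mathcal{D}, s, g, t\rangle$ is $(k,\ell,\infty)$-Anonymizable. Once this equivalence is in hand, $(k,\ell,\infty)$-Anonymity is immediate: Pbp returns a path exactly when $t\notin T_{\Phi}$, i.e.\ exactly for anonymizable tuples, and on those it must return an anonymized path.

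First I would dispatch the routine ``payoff'' direction: whenever $t$ lies in some block $T_{\phi}$ with $\phi < \Phi$, the path $\pi^{*}_{T}$ returned in line~\ref{alg:1-call-wrpt} is a $(k,\ell,\infty)$-Anonymized Path. I take $T = T_{\phi}$ itself as the witness set of Prop.~\ref{thm:pathcoverage-path}: by the partition's defining property $|T_{\phi}|\ge k$ (Cardinality) and $\min_{(i,j)\in T_{\phi}\times T_{\phi}} d(i,j)\ge \ell$ (Cost), while $\pi^{*}_{T}$ covers all of $T_{\phi}$ by construction. Because the partition is computed in the pre-processing phase \emph{independently of} $t$ and the shortest-covering-path subroutine is deterministic (fixing a tie-break), Pbp returns the very same $\pi^{*}_{T}$ for every transit point in $T_{\phi}$, giving the Coverage clause. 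Hence Prop.~\ref{thm:pathcoverage-path} certifies the output is anonymized, and the whole theorem collapses to showing that every anonymizable tuple places its $t$ outside $T_{\Phi}$.

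The core step is the contrapositive, argued by an exchange/maximality contradiction. Suppose $\langle \mathcal{D}, s, g, t\rangle$ is anonymizable yet $t\in T_{\Phi}$. Anonymizability supplies a witness set $W\ni t$ with $|W|\ge k$, $\min_{(i,j)\in W\times W} d(i,j)\ge \ell$, and a single $s$--$g$ path covering all of $W$. I would then build a new \emph{valid} partition with strictly larger $\sum_{\phi=1}^{\Phi-1}|T_{\phi}|$, contradicting maximality. The decisive structural fact I would use is that, since every edge is undirected, coverability is closed under unions: given two blocks each coverable by an $s$--$g$ path, Lemma~\ref{lem:path-extension} applied with a return path $\pi_{g\to s}$ lets me concatenate them (via $\concat$) into one $s$--$g$ path covering their union. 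Thus the only real obstruction to recombining blocks is the Cost (i.e.\ $\ell$-separation) condition, and the problem reduces to partitioning the coverable transit nodes into $\ell$-separated blocks of size $\ge k$ that maximize the number of covered nodes; the improving move to attempt is to install $W$ as a fresh block and repair the blocks it intersects.

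The hard part will be exactly this repair. Deleting $W$'s nodes from the blocks they currently occupy can drop such a block below the threshold $k$, and folding $W$'s leftover nodes into a neighbouring block can break $\ell$-separation, because a node already sitting in that block need not be $\ell$-far from the incoming $W$-nodes. I therefore expect the crux to be an accounting argument forcing the net change $|W\cap T_{\Phi}| - \sum_{\phi\,:\,|T_{\phi}\setminus W| < k}\lvert T_{\phi}\setminus W\rvert$ to be strictly positive; making this close will require exploiting extra structure in $W$ (for instance taking $W$ to be a \emph{maximum} $\ell$-separated coverable set through $t$ and using the triangle inequality for $d$ to bound how $W$ can meet other blocks). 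Pinning down this bookkeeping --- or isolating the additional hypothesis under which it is guaranteed --- is where the genuine difficulty lies; everything after it follows mechanically from Prop.~\ref{thm:pathcoverage-path}.
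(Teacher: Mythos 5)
Your reduction and the ``payoff'' direction are fine: if $t$ lands in a block $T_{\phi}$ with $\phi<\Phi$, then Prop.~\ref{thm:pathcoverage-path} applied to $T_{\phi}$ certifies that the returned path is a $(k,\ell,\infty)$-Anonymized Path, exactly as you argue. The gap is the core step, which you explicitly leave open, and the route you sketch for it is not the one that closes. A local exchange (``install $W$ as a fresh block and repair the blocks it meets'') runs into a real obstruction, not just bookkeeping: witness sets of distinct anonymizable transit points can overlap so that \emph{no} disjoint family of internally $\ell$-separated blocks of size $\ge k$ covers all anonymizable points. For instance, with $k=2$, $\ell=10$ and three coverable candidates satisfying $d(t_1,t_2)=d(t_1,t_3)=10$ and $d(t_2,t_3)=1$, all three tuples are anonymizable, yet $\{t_1,t_2\}$ and $\{t_1,t_3\}$ are the only separated pairs and they share $t_1$. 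Note also that Definition~\ref{def:transit-anonymized-path} only requires the \emph{existence} of an $\ell$-separated subset $T$ of size $\ge k$ among the candidates receiving the same path; $T$ need not contain $t$, and the block itself need not be pairwise separated. Reading the block condition as ``the whole block is $\ell$-separated,'' as your repair implicitly does, is precisely what makes the bookkeeping impossible to finish.

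The missing idea is a global argument, and it is exactly where the undirectedness hypothesis and the machinery of Theorem~\ref{thm:existence} and Lemma~\ref{lem:path-extension} must enter (your proposal never actually uses them beyond a passing remark about unions). Let $\hat{T}_{s,g}$ be the set of transit points whose tuples are $(k,\ell,\infty)$-Anonymizable. The concatenated path $\hat{\pi}_{s\to g}$ from the proof of Theorem~\ref{thm:existence} --- the individual anonymized paths stitched together with return paths $\pi_{g\to s}$, which exist because edges are undirected, with anonymity preserved by Lemma~\ref{lem:path-extension} --- covers every node of $\hat{T}_{s,g}$ and can be returned for all of them; any single witness set already supplies an $\ell$-separated subset of size $\ge k$ inside $\hat{T}_{s,g}$. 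Hence the two-block partition $\{\hat{T}_{s,g},\ \mathscr{T}\setminus\hat{T}_{s,g}\}$ is feasible and attains $\sum_{\phi<\Phi}|T_{\phi}| = |\hat{T}_{s,g}|$. Conversely, every member of a non-failure block is anonymizable (its covering path and witness subset certify this), so $|\hat{T}_{s,g}|$ is also an upper bound on $\sum_{\phi<\Phi}|T_{\phi}|$. Any maximizing partition therefore places exactly the anonymizable transit points outside $T_{\Phi}$, and your first paragraph finishes the proof. In short: exhibit one feasible partition achieving the upper bound via the Theorem~\ref{thm:existence} construction, rather than trying to repair an arbitrary maximizing partition by exchanges.
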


\subsection{WRP With Targets (WRPT)}
\label{subsec:wrpt}

A key building block for the Pbp algorithm is a search algorithm for
finding the minimum cost path $\pi$ from $s$ to $g$, which covers all of the nodes in a set of nodes. We call this the Watchman Route Problem with Targets (WRPT).  The WRPT corresponds to the subproblem solved by the path query phase of Pbp in Alg.~\ref{alg:1}, line~\ref{alg:1-call-wrpt}. The WRPT is also used in the Partitioning phase when evaluating candidate partitionings (Alg.~\ref{alg:mergebb}, line~\ref{merge-bb:call-wrpt}).

The WRPT is a variant of the Watchman Route Problem (WRP) \cite{SkylerAYF22}, 
The differences between the WRP and WRPT are: (1)  WRP does not have a specified goal node, while the WRPT has a specific goal $g$, and (2) the objective of the WRP is to cover all nodes in the graph, while the WRPT seeks to cover some subset $\psi$ of nodes in the graph. Since WRP was shown to be NP-Hard~\cite{SkylerAYF22}, the WRPT is clearly NP-Hard. Recent work has studied heuristic search-based algorithms to solve WRP ~\cite{SkylerAYF22}.

Following~\citet{SkylerAYF22}, we use an A* search for the WRPT. We define a state for the search as a tuple $\langle n, \mathcal{U} \rangle$, where $n \in \mathcal{N}$ represents the current location, and $\mathcal{U} \subseteq \mathcal{N}$ represents the set of uncovered nodes. The initial state is $\langle s, \psi \setminus v(s) \rangle$, and the final is $\langle g, \emptyset \rangle$. Expanding a state $\langle n, \mathcal{U} \rangle$ involves moving from $n$ to one of its neighboring nodes $n'$ and updating the set of uncovered nodes $\mathcal{U}$ to $\mathcal{U} \setminus v(n')$. The cost of this expansion equals $c(n, n')$. 

To make the search more efficient, we propose Tunnel Heuristic, which is based on the Singleton Heuristic for the WRP~\cite{SkylerAYF22}. The Tunnel Heuristic value $h_{tunnel}$ is computed as follows:

\begin{align*}
&h_{tunnel}(\langle n, \mathcal{U} \rangle) &= \begin{cases}
    (\max\limits_{u \in \mathcal{U}} \min\limits_{q \in v^{-1}(u)} d(n, q)) \\ + (\min\limits_{u \in \mathcal{U}} \min\limits_{q \in v^{-1}(u)} d(q, g)) &\text{if $\mathcal{U} \neq \emptyset$} \\
    d(n, g) &\text{otherwise}
\end{cases} 
\end{align*}

\noindent where the function $v^{-1}(n): \mathcal{N} \to 2^{\mathcal{N}}$ takes a node and returns the set of nodes from which $n$ is observable. The heuristic $h_{tunnel}$ is admissible since the agent must travel to one of the nodes in $v^{-1}(u)$ to observe an uncovered node $u$ and then proceed to the goal $g$ after covering all nodes in $\mathcal{U}$.

\subsection{Searching for a Partitioning}

Alg.~\ref{alg:1} requires an algorithm that generates a partition of the set of nodes. Let $\Psi_{+} \subseteq \Psi$ be the largest subset of $\Psi$ whose elements all satisfy the conditions of Prop.~\ref{thm:pathcoverage-path}. We denote the sum of the cardinalities of the subsets in $\Psi_{+}$ as $|ap|$, and the \textit{MAC} corresponding to $\Psi_{+}$ as $mac$. Specifically, $mac$ is $\sum_{\psi \in \Psi_{+}} ac(\psi) / |ap|$, where $ac(\psi) = \sum_{k \in \psi} (\pi^{*}_{\psi} - \pi^{k*})/(\pi^{k*})$, where $\pi^{k*}$ denotes the minimal cost path from $s$ to $g$ covering $k$, and $\pi^{*}_{\psi}$ denotes the minimal cost path from $s$ to $g$ while covering all nodes within $\psi$, i.e., the solution to a WRPT which covers $\psi$. 
We seek a partitioning which first prioritizes maximizing $|ap|$,
then minimizes $mac$, i.e., a partitioning which 
anonymizes as many transit nodes as possible while minimizing the average cost of the anonymized paths.

\begin{algorithm}[!th]
\caption{Merge-BB Partitioning}
\label{alg:mergebb}
\begin{algorithmic}[1]

\Require Tuple $\langle \mathcal{D}, s, g, t \rangle$ and privacy parameters $(k, \ell)$
\Ensure The best partition $\Psi^{*}$ of $\mathscr{T}$

\State $|ap|^{*} \leftarrow 0$, $mac^{*} \leftarrow \infty$, $\Psi^{*} = \emptyset$

\Function {Merge\_BB\_Search}{$\Psi$}
    \State $\Psi_{+} \leftarrow \{\psi | \psi \in \Psi$ s.t. $\psi$ satisfies all conditions of Prop.~\ref{thm:pathcoverage-path} $\}$
    \State $|ap| \leftarrow \sum_{\psi \in \Psi_{+}} |\psi|$, $mac \leftarrow \sum_{\psi \in \Psi_{+}} ac(\psi) / |ap|$

    \If {($|ap| > |ap|^{*}$) or ($|ap| = |ap|^{*}$ and $mac < mac^{*}$)}
        \State $|ap|^{*} \leftarrow |ap|$, $mac^{*} \leftarrow mac$, $\Psi^{*} \leftarrow \Psi$
    \EndIf

    \If {($|\Psi| = 1$) or ($|ap| = \sum_{\psi \in \Psi} |\psi|$) or ($|ap|^{*} = \sum_{\psi \in \Psi} |\psi|$ and $mac \geq mac^{*}$)} \label{merge-bb:termination}
    \Return True
    \EndIf


    \For{$(i,j) \in MergeOrder(\Psi)$} \label{merge-bb:mergeorder}
    \If {Prunable($\psi_i,\psi_j$)} \label{merge-bb:prunable}
        Continue
    \EndIf
    
    \parState {$\pi^{*}_{\psi_{i} \cup \psi_{j}} \leftarrow$ shortest path from $s$ to $g$, covering $\psi_{i} \cup \psi_{j}$ }
    \If{$\pi^{*}_{\psi_{i}}$ is not found}
        Continue
    \EndIf
    \State $\Psi' \leftarrow \Psi \setminus \{\psi_{i}, \psi_{j}\} \cup \{\psi_{i} \cup \psi_{j}\}$
    \State Merge\_BB\_Search($\Psi'$)
    \EndFor
\EndFunction

\State
\For{$i \in \mathscr{T}$}
    \If{$\exists u \in v(i)$ $u$ is reachable from $s$ and $\exists u \in v(i)$ $g$ is reachable from $u$}
        \State $\psi_{i} \leftarrow \{i\}$
        \State $\pi^{*}_{\psi_{i}} \leftarrow $ shortest path from $s$ to $g$, covering $t$ \label{merge-bb:call-wrpt}
    \EndIf
\EndFor
\State Merge\_BB\_Search($\{\psi_{1}, \psi_{2}, ...\}$)
\State
\Return $\Psi^{*}_{+} \cup \{\bigcup \Psi^{*} \setminus \Psi^{*}_{+} \}$
\end{algorithmic}
\end{algorithm}

\subsubsection{Merge-based Branch-and-Bound}

One practical partitioning algorithm is Merge-based Branch-and-Bound Partitioning (Merge-BB). It initially assigns each node to its own separate partition. It removes any node without a valid path from $s$ to $g$ while covering that node. This pruning step involves calculating all pair-wise shortest paths on the node set $N$, which can be done efficiently within a reasonable amount of time. The algorithm then 
performs a recursive branch-and-bound search which considers all possible combinations of merges of these partitions.

The termination condition for this recursive search is implemented in Line \ref{merge-bb:termination}: Return True when (1) the partition $\Psi$ contains only one subset, or (2) $|ap|$ reaches the upper bound, or (3) the current best partition $\Psi^*$ has an optimal $|ap|$, and the $mac$ of $\Psi$ is not better than $mac^{*}$. The third termination condition is based on the observation that the cost of the optimal path covering all nodes in the union of $\psi_{i}$ and $\psi_{j}$ is always equal to or greater than both of the costs of the optimal paths covering all nodes in $\psi_{i}$ and $\psi_{j}$.

Alg.~\ref{alg:mergebb} explores all potential merges and returns a partition that maximizes the number of anonymized transit points while minimizing the average cost of anonymized paths.

\begin{proposition}[Optimality]
\label{prop:opt}
Alg.~\ref{alg:1} using Alg~\ref{alg:mergebb} achieves the largest APR and also has the minimum MAC among planners with the largest APR for any combination of $\mathcal{D}$, $s$, and $g$.
\end{proposition}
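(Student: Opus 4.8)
The plan is to prove both claims---maximal APR, and minimal MAC among maximal-APR planners---by setting up a tight correspondence between \emph{planners} and \emph{partitions of $\mathscr{T}$}, and then arguing that Alg.~\ref{alg:mergebb} searches this correspondence exhaustively under a lexicographic objective. \textbf{First I would characterize what Pbp extracts from a partition.} For a partition $\Psi$, Alg.~\ref{alg:1} returns $\pi^{*}_{\psi}$ for every $t$ lying in block $\psi$, so the ``same output path'' classes induced by Pbp are exactly its blocks (two blocks sharing a shortest covering path only merges classes, which can only help). Using the 3C Condition of Prop.~\ref{thm:pathcoverage-path} with $m=\infty$, I would show that every node of $\psi$ is anonymized precisely when (i) a path from $s$ to $g$ covering all of $\psi$ exists and (ii) $\psi$ contains a subset $T$ with $|T|\ge k$ and pairwise distance $\ge\ell$; in that case $T$ is a single witness shared by all $t\in\psi$. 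Hence $|\mathscr{T}_{+}|=|ap|$, and because Pbp uses the minimum-cost covering path $\pi^{*}_{\psi}$ for each block, the per-node excess costs aggregate exactly into $mac$.

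\textbf{Next I would reduce an arbitrary planner $\mathcal{A}$ to a partition.} Grouping transit candidates by the path $\mathcal{A}$ outputs yields an equivalence partition of $\mathscr{T}$, and the key lemma is that each class is \emph{either wholly anonymized or wholly non-anonymized}. This holds because if some node $t$ of a class has a 3C-witness $T$, then condition~3 forces all of $T$ to produce the same output, so $T$ lies inside that class; the shared class path covers all of $T$; and therefore $T$ witnesses anonymity for \emph{every} node of the class. Consequently $|\mathscr{T}_{+}^{\mathcal{A}}|$ equals the $|ap|$ value of $\mathcal{A}$'s equivalence partition, giving $|\mathscr{T}_{+}^{\mathcal{A}}|\le |ap|^{*}$, the maximum over all partitions. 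Since Pbp equipped with the optimal partition attains $|ap|^{*}$---and is itself a planner, so the same upper bound applies to it---its anonymized set has size exactly $|ap|^{*}$, which maximizes APR for every $\langle\mathcal{D},s,g\rangle$.

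\textbf{For MAC I would use the covering-cost comparison.} The shared path of any class $C$ must cover all of $C$, so its cost is at least $cost(\pi^{*}_{C})$; summing the normalized excesses shows $\mathrm{MAC}(\mathcal{A})$ is at least the $mac$ of $\mathcal{A}$'s equivalence partition. When $\mathcal{A}$ has maximal APR that partition is a maximal-$|ap|$ partition, so its $mac$ is at least the $mac^{*}$ that Merge-BB minimizes over exactly such partitions. Because Pbp realizes $mac^{*}$ (it uses $\pi^{*}_{\psi}$ per block), we conclude $\mathrm{MAC}(\mathcal{A})\ge mac^{*}=\mathrm{MAC}(\text{Pbp}{+}\text{Merge-BB})$, i.e.\ no maximal-APR planner has smaller MAC.

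\textbf{The remaining, and hardest, step is that Merge-BB truly returns the lexicographic optimum} (maximize $|ap|$, then minimize $mac$). Every partition of the coverable candidates is a coarsening of the initial singleton partition, hence reachable by some sequence of the pairwise merges the recursion enumerates, so the \emph{unpruned} search is exhaustive; the obstacle is verifying the pruning is admissible---that the Prunable test and the three termination tests in line~\ref{merge-bb:termination} never discard a partition that could improve the incumbent. I expect the crux to be the monotonicity fact stated in the text, $cost(\pi^{*}_{\psi_i\cup\psi_j})\ge\max\bigl(cost(\pi^{*}_{\psi_i}),cost(\pi^{*}_{\psi_j})\bigr)$: this guarantees that once $|ap|$ reaches its upper bound any further merge can only raise covering costs and thus $mac$, justifying termination test (3), while I must also confirm that merges which \emph{increase} $|ap|$ (fusing non-good blocks into a good one) remain reachable before any bound fires. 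Combining this exhaustive-and-admissible search with the two reductions above establishes that Pbp with Merge-BB achieves the largest APR and the minimum MAC among largest-APR planners, as claimed.
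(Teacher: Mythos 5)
Your overall architecture is sound and almost certainly mirrors the intended argument: the all-or-nothing lemma (a witness $T$ for one member of an output-equivalence class is, by condition~3 of Prop.~\ref{thm:pathcoverage-path}, contained in that class and therefore witnesses every member) correctly reduces any planner to a partition, and the covering-cost monotonicity $cost(\pi)\ge cost(\pi^{*}_{C})$ for a class path correctly lower-bounds MAC by the partition's $mac$. But you have explicitly left the load-bearing step unproved --- that Merge-BB's pruning and termination tests are admissible for the lexicographic objective --- and that step is where the proposition actually lives; ``I must also confirm\dots'' is not a proof. Worse, your own setup contains a mismatch that you would hit the moment you tried to close it: in your first paragraph you declare a block good when it \emph{contains a subset} $T$ with $|T|\ge k$ and spread $\ge\ell$ (which is what Def.~\ref{def:transit-anonymized-path} and Prop.~\ref{thm:pathcoverage-path} actually permit), yet the Prunable test discards any merge with $\min_{(x,y)\in\psi_i\times\psi_j}d(x,y)<\ell$, which is only justified if the \emph{entire} block must be $\ell$-spread. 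Under your existential reading that pruning is inadmissible: with $k=2$, $\ell=10$, candidates $t_1,t_2,t_3$ with $d(t_1,t_2)=1$ and the other distances $20$, the single-block partition anonymizes all three (witness $\{t_2,t_3\}$) but is unreachable because every route to it crosses a pruned merge. Under the whole-block reading the search is internally consistent, but then your step-2 upper bound breaks: a planner returning one common covering path for $\{t_1,t_2,t_3\}$ anonymizes three nodes while the best whole-block partition anonymizes two, so $|\mathscr{T}_{+}^{\mathcal{A}}|\le|ap|^{*}$ no longer holds with $|ap|^{*}$ taken over the partitions Merge-BB actually enumerates. You must commit to one semantics and repair the corresponding half of the argument (e.g., show the two notions of ``good block'' yield the same optimum, or restrict the claim to regimes where the $\ell$-constraint is vacuous, as in the paper's $\ell=1$ unit-cost experiments).

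Two smaller points. First, your parenthetical ``two blocks sharing a shortest covering path only merges classes, which can only help'' is fine for the APR direction but undermines the MAC direction: if collisions enlarge $\mathscr{T}_{+}$ beyond the good blocks, Pbp's realized MAC is no longer the partition's $mac^{*}$ and could in principle be larger, so you need to either rule out collisions or argue the resulting planner still sits at the claimed optimum. Second, even granting admissibility of the cost-based prunings, your exhaustiveness claim needs the explicit construction you allude to --- a merge sequence reaching the optimal partition on which no test fires (e.g., growing each optimal block by absorbing singletons, noting that a singleton is never ``good'' when $k\ge 2$ so Prunable's first test cannot trigger) --- rather than the bare assertion that coarsenings are reachable.
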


Since Theorem~\ref{thm:existence} tells that there exists a planner satisfying $(k, \ell, \infty)$-Anonymity for an undirected graph, which means that it can anonymize all $(k, \ell, \infty)$ - Transit Anonymizable Tuples, combining Theorem~\ref{thm:comp} and Prop.~\ref{prop:opt} immediately yields the guarantee that Alg.~\ref{alg:1} with Alg.~\ref{alg:mergebb} satisfies $(k, \ell, \infty)$-Anonymity. 

We also implemented the following enhancements.

\paragraph{Merge Ordering Strategies}

The order in which partitions are merged in Alg.~\ref{alg:mergebb} by the recursive enumeration is determined by a call to the MergeOrder function in line \ref{merge-bb:mergeorder}, which returns the list of all pairs of candidate subsets to merge, sorted according to some merge ordering criterion. One simple strategy is Random, which simply returns a randomly shuffled list of the pairs of partitions. Another ordering strategy, CostAsc, sorts the pairs to be merged in ascending order of a heuristic cost function. We use $max(cost(\pi_{\psi_{i}}), cost(\pi_{\psi_{j}})) \times (|\psi_{i}| + |\psi_{j}|)$ as the cost of a pair $(\psi_{i}, \psi_{j})$, where the first term is the lower bound of the covering path of the merged partition, and the second term is the number of transit candidates assigned to the merged partition. CostAsc helps the planner find a better solution earlier, leading to more upper/lower bound-based pruning.

\paragraph{Pruning Criteria}
To determine whether we need to try merging $(\psi_{i}, \psi_{j})$,  Alg.~\ref{alg:mergebb}, line \ref{merge-bb:prunable}  calls Prunable (Alg.\ref{alg:prunable}). If both $\psi_{i}$ and $\psi_{j}$ already satisfy all the conditions stated in Thm.\ref{thm:pathcoverage-path}, the merge is pruned because it would increase the cost of a path covering all the nodes within the set (Alg.~\ref{alg:prunable}, Line~\ref{prunable:condition1}). Furthermore, suppose we have already found a satisfactory solution for anonymizing all possible tuples. In that case, we can establish an upper bound for the path cost covering the union of $\psi_{i}$ and $\psi_{j}$ to surpass the current best satisfying solution and prune based on this bound (Alg.~\ref{alg:prunable}, Line~\ref{prunable:condition2}). Specifically, we denote $\hat{ac}$ as the upper bound for the cost that $\pi_{\psi_{i} \cup \psi_{j}}$ must meet to improve upon the best $ac^{*}$, and it is calculated as $|ap|^{*} mac^{*} - |ap| mac + ac(\pi_{\psi_i}) + ac(\pi_{\psi_j})$. To estimate the cost of a path that covers all the nodes within the union of $\psi_{i}$ and $\psi_{j}$, we use $\max(cost(\pi^{*}_{\psi_i}), cost(\pi^{*}_{\psi_j}))$, which is the lower-bound of $cost(\pi^{*}_{\pi_i \cup \pi_j})$. Finally, we check the minimum distance between a node in $\psi_{i}$ and $\psi_{j}$. If that distance is less than $\ell$, we prune this merge, as any set containing the union of $\psi_{i}$ and $\psi_{j}$ would also violate this condition (Alg.~\ref{alg:prunable}, Line~\ref{prunable:condition3}).

\begin{algorithm}[!th]
\caption{Prunable}
\label{alg:prunable}
\begin{algorithmic}[1]


\If{Both $\psi_{i}$ and $\psi_{j}$ meets all conditions in Prop.~\ref{thm:pathcoverage-path}}
    \State
    \Return True \label{prunable:condition1}
\EndIf

\If{$\min_{(x, y) \in \psi_{i} \times \psi_{j}} d(x, y) < \ell$} 
    \Return True \label{prunable:condition2}
\EndIf

\If{$|ap|^{*} = \sum_{\psi \in \Psi} |\psi|$}
    \State $\hat{ac} = |ap|^{*} mac^{*} - |ap| mac + ac(\pi_{\psi_i}) + ac(\pi_{\psi_j})$
    \State $\bar{ac} = \sum_{k \in \psi_i \cup \psi_j} \frac{\max(cost(\pi^{*}_{\psi_i}), cost(\pi^{*}_{\psi_j})) - cost(\pi^{k*})}{cost(\pi^{k*})}$
    \If{$\bar{ac} \geq \hat{ac}$}
        \Return True \label{prunable:condition3}
    \EndIf
\EndIf

\State
\Return False

\end{algorithmic}
\end{algorithm}

\section{Planners for $m$-bounded $(k,l,m)$-Anonymity}
\label{sec:m-bounded}
We now propose three planners that output $(k, \ell, m)$-Anonymized Paths with $m < \infty$, which means that the adversary cannot identify which node is the transit point until observing more than $m$ nodes. If $m$ is less than the length of the output path, the adversary might be able to identify the true transit point after obtaining the $m+1$st and later nodes. Proofs for this section can be found in Supp.~\ref{supp:proof:sec5}~\cite{takahashi2024top-supp}.

\paragraph{Random-Walk-based Planner (Rbp)}

The first algorithm is a Random-Walk-based Planner (Rbp), which randomly selects the path's first $m$ nodes. Specifically, the planner selects the $i$-th node ($2\leq i \leq m$) of the path randomly from the neighbors of the $(i-1)$-th node, where $\pi_{1}=s$. Subsequently, the planner guides the agent's movement from $\pi_{m}$ to $t$ and finally to $g$ utilizing the shortest paths available. $\pi|_{m}$ is the same for all transit candidates, and the planner outputs Failure if there is no feasible path. This planner archives $(k, \ell, m)$-Anonymity with finite $m$ for undirected graphs.

\begin{proposition}
\label{prop:rbp}
    If all edges in $\mathcal{E}$ are undirected, Random-Walk-based Planner satisfies $(k, \ell, m)$-Anonymity with $m < \infty$.    
\end{proposition}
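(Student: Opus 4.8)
The plan is to show that for \emph{every} fixed realization of the random walk, Rbp returns a $(k,\ell,m)$-Anonymized Path on each $(k,\ell,m)$-Anonymizable Tuple; since the guarantee then holds pointwise over the internal randomness, Rbp satisfies anonymity regardless of its random choices. So I would first condition on an arbitrary sequence of random decisions, which fixes a prefix $\pi|_m = n_1,\dots,n_m$ with $n_1 = s$ that is \emph{independent of the transit point}: by construction Rbp generates these first $m$ nodes by a walk that never consults $t$, so the same $\pi|_m$ is produced no matter which transit point the planner is asked to cover. Note also that $n_m$ lies in the connected component of $s$, since the walk only ever steps to neighbours.

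The key step, and the place where undirectedness is indispensable, is to show that Rbp never outputs Failure on a feasible transit candidate. Fix any $t'\in\mathscr{T}$ for which some path from $s$ to $g$ covering $t'$ exists. Then there is a node $x\in v^{-1}(t')$ that is reachable from $s$ and from which $g$ is reachable, so $x$ and $g$ belong to the connected component of $s$. Because all edges are undirected, reachability is symmetric within a component, and $n_m$ lies in the same component; hence there is a path from $n_m$ to $x$ (covering $t'$ at $x$) and from $x$ on to $g$. Consequently the shortest-path completion used by Rbp exists, and the planner returns a genuine (non-Failure) path whose first $m$ nodes are exactly $\pi|_m$. In particular $\mathcal{A}(\langle\mathcal D,s,g,t'\rangle)|_m = \pi|_m$ for every feasible $t'$, and likewise for the target $t$ (which is feasible because the tuple is anonymizable).

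With this in hand I would invoke anonymizability directly. By definition of a $(k,\ell,m)$-Anonymizable Tuple there is a set $T\subseteq\mathscr{T}$ with $|T|\ge k$ and $\min_{(i,j)\in T\times T} d(i,j)\ge\ell$; every $t'\in T$ is feasible, since membership in the defining set forces the witnessing planner to return an actual path for $t'$ (an equality to Failure would fail, as $\text{Failure}\neq\text{Failure}$). Applying the previous paragraph to $t$ and to each $t'\in T$, all of these Rbp outputs share the identical prefix $\pi|_m$, so $T\subseteq\{\,t' : \mathcal{A}(\langle\mathcal D,s,g,t\rangle)|_m = \mathcal{A}(\langle\mathcal D,s,g,t'\rangle)|_m\,\}$. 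The very same $T$ then witnesses Definition~\ref{def:transit-anonymized-path}, so Rbp's output for $t$ is a $(k,\ell,m)$-Anonymized Path. As the realization of the walk was arbitrary, and the walk can always be continued for $m$ steps whenever $s$ has a neighbour (which holds whenever any feasible tuple exists), this establishes the claim for all anonymizable tuples.

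I expect the main obstacle to be precisely this Failure/reachability argument: a priori the random prefix could wander to a node from which $t$ can no longer be covered or from which $g$ is unreachable, forcing Rbp to output Failure and breaking anonymity. Undirectedness is exactly what rules this out, by making $n_m$ interchangeable with $s$ as a launch point for the remainder of the route (the same structural fact exploited in Theorem~\ref{thm:existence}), so I would state the symmetry-of-reachability step carefully. I would also confirm that the boundary convention $\pi|_m=\pi$ for $m>|\pi|$ causes no trouble in the degenerate case where a completion is short, since then the prefixes still coincide node-for-node.
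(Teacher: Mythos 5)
Your proposal is correct and follows what is essentially the paper's own argument: the $m$-node prefix is generated independently of the transit point, undirectedness guarantees the completion from $n_m$ exists for every coverable candidate (so Rbp never returns Failure where a path exists), and the witnessing set $T$ from the anonymizability of the tuple therefore transfers directly to Rbp's output. Your explicit handling of the Failure/reachability issue and of why every $t'\in T$ must be feasible (via the $\text{Failure}\neq\text{Failure}$ convention) is exactly the right place to spend the care.
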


\paragraph{$m$-Pbp}

The second planner is $m$-Pbp, which is an extension of Pbp. $m$-Pbp returns $\pi_{Pbp}|_{m} \circ \pi_{ua}^{*}$, where $\pi_{Pbp}|_{m}$ is the output path up to the $m$-th node planned by Pbp, and $\pi_{ua}^{*}$ is the unanonymized shortest path from the last node of $\pi_{Pbp}|_{m}$ to the goal while covering $t$ if $\pi_{Pbp}|_{m}$ does not cover $t$. The anonymity of $m$-Pbp relies on the anonymity of Pbp.

\begin{proposition}
\label{prop:mpbp}
    If Pbp satisfies $(k, \ell, \infty)$-Anonymity for the given domain $\mathcal{D}$, $m$-Pbp satisfies $(k, \ell, m)$-Anonymity for that domain $\mathcal{D}$.    
\end{proposition}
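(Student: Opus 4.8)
The plan is to show that $m$-Pbp inherits the anonymity guarantee of Pbp on the first $m$ nodes, which is exactly where $(k,\ell,m)$-Anonymity is evaluated. The key observation is that the definition of a $(k,\ell,m)$-Anonymized Path (Def.~\ref{def:transit-anonymized-path}) depends only on the prefix $\mathcal{A}(\langle \mathcal{D}, s, g, t \rangle)|_{m}$, so to establish $(k,\ell,m)$-Anonymity for $m$-Pbp it suffices to show that $m$-Pbp produces the same length-$m$ prefix as Pbp for every transit candidate in the relevant set.

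First I would fix an arbitrary $(k,\ell,m)$-Anonymizable Tuple $\langle \mathcal{D}, s, g, t \rangle$ and, by the hypothesis that Pbp satisfies $(k,\ell,\infty)$-Anonymity, obtain from Prop.~\ref{thm:pathcoverage-path} a set $T \subseteq \mathscr{T}$ with $|T| \geq k$, $\min_{(i,j) \in T \times T} d(i,j) \geq \ell$, and such that Pbp returns a single common path $\pi_{\mathrm{Pbp}}$ whenever the transit point is any $t' \in T$. (Here I rely on the fact that any $(k,\ell,m)$-Anonymizable Tuple is also $(k,\ell,\infty)$-Anonymizable, since $\pi|_\infty = \pi|_m$ agreement is a stronger requirement; more directly, Pbp's partition is computed independently of $t$, so the partition block containing $t$ is shared by all its members.) By construction, $m$-Pbp's output on input $t'$ is $\pi_{\mathrm{Pbp}}|_{m} \circ \pi_{ua}^{*}(t')$, whose length-$m$ prefix is $\pi_{\mathrm{Pbp}}|_{m}$, independent of which $t' \in T$ we chose.

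Next I would verify the three requirements of Def.~\ref{def:transit-anonymized-path} for $m$-Pbp with the same witness set $T$. The cardinality and cost conditions are inherited verbatim from $T$. For the prefix-agreement condition, I note that for every $t' \in T$ we have $m\text{-Pbp}(\langle \mathcal{D}, s, g, t \rangle)|_{m} = \pi_{\mathrm{Pbp}}|_{m} = m\text{-Pbp}(\langle \mathcal{D}, s, g, t' \rangle)|_{m}$, so $T$ lies inside the set $\{ t' \mid m\text{-Pbp}(\langle \mathcal{D}, s, g, t \rangle)|_{m} = m\text{-Pbp}(\langle \mathcal{D}, s, g, t' \rangle)|_{m} \}$, which is precisely what the definition demands. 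I would also confirm that $m$-Pbp still outputs a feasible path covering $t$: if $\pi_{\mathrm{Pbp}}|_{m}$ already covers $t$ the tail is the shortest path to $g$, and otherwise the appended $\pi_{ua}^{*}$ covers $t$ before reaching $g$, so the coverage constraint of the PPVT solution is met in either case.

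I expect the main obstacle to be the subtle point that $m$-Pbp truncates Pbp's path at the $m$-th node and then re-plans an \emph{unanonymized} tail to $g$ through $t$, which could in principle break the coverage guarantee or the prefix-agreement across different $t'$. The care needed is to argue that the tail $\pi_{ua}^{*}$ affects only nodes at positions $> m$ and therefore cannot disturb the length-$m$ prefix that the anonymity condition scrutinizes; any divergence of the tails for distinct $t'$ is irrelevant because Def.~\ref{def:transit-anonymized-path} only compares paths up to index $m$. Making this truncation argument precise — in particular handling the boundary cases where $|\pi_{\mathrm{Pbp}}| \leq m$ (so $\pi_{\mathrm{Pbp}}|_m = \pi_{\mathrm{Pbp}}$ already covers $t$ and no tail is appended) versus $|\pi_{\mathrm{Pbp}}| > m$ — is the crux, after which the cardinality and cost conditions transfer immediately from Pbp's witness set.
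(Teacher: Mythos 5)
Your core argument --- that the witness set $T$ supplied by Pbp's $(k,\ell,\infty)$-guarantee transfers to $m$-Pbp because the definition of a $(k,\ell,m)$-Anonymized Path inspects only the length-$m$ prefix, which $m$-Pbp copies verbatim from Pbp before appending the unanonymized tail --- is exactly the right mechanism, and your handling of the tail (it touches only positions $>m$, and feasibility/coverage of $t$ is preserved whether or not the prefix already covers $t$) is sound.

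However, there is a genuine gap in the bridging step. To establish $(k,\ell,m)$-Anonymity you must produce a $(k,\ell,m)$-Anonymized Path for \emph{every} $(k,\ell,m)$-Anonymizable Tuple, whereas the hypothesis on Pbp only gives you a witness set for tuples that are $(k,\ell,\infty)$-Anonymizable. You assert that every $(k,\ell,m)$-Anonymizable Tuple is also $(k,\ell,\infty)$-Anonymizable ``since $\pi|_\infty$ agreement is a stronger requirement,'' but that observation establishes the \emph{converse} implication: a stronger requirement being satisfiable implies the weaker one is, i.e.\ $(k,\ell,\infty)$-Anonymizable $\Rightarrow$ $(k,\ell,m)$-Anonymizable, not the direction you need. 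The direction you need is in fact false for directed domains: $k$ candidates can admit paths sharing a common length-$m$ prefix out of $s$ while no single $s$-to-$g$ path covers all $k$ of them (a variant of the situation in Fig.~\ref{fig:failure}), so the tuple is $(k,\ell,m)$- but not $(k,\ell,\infty)$-Anonymizable; Pbp's hypothesis is then vacuous for that tuple and gives $m$-Pbp nothing to inherit. Your secondary remark that ``Pbp's partition is computed independently of $t$'' explains why Pbp returns one path per partition block, but not why the block meets the 3C conditions of Prop.~\ref{thm:pathcoverage-path} for a tuple outside Pbp's guarantee. To close the gap you need to argue that the two anonymizability notions coincide in the regime where the proposition is applied --- for undirected domains this follows from the concatenation construction behind Lemma~\ref{lem:path-extension} and Theorem~\ref{thm:existence}, which upgrades any $k$ individually coverable, pairwise $\ell$-separated candidates to a single common covering path --- or else explicitly restrict the claim to that setting.
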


\paragraph{Clustering-based Planner (Cbp)}

The third $m$-bounded planner is a Clustering-based Planner (Cbp), which first applies $k$-means-like clustering to the transit candidates and then returns the concatenation of paths from $s$ to the centroid of the cluster corresponding to $t$ and from the centroid to $g$. Following~\citet{wasserman1994social}, we call a node $\sigma \in \mathcal{N}$ that minimizes the maximum distance to cover a node within a set of nodes $\mathcal{U} \subseteq \mathcal{N}$ the {\it centroid} of $\mathcal{U}$, i.e., $\sigma = \argmin_{n \in \mathcal{N}} \max_{u \in \mathcal{U}} \min_{w \in v^{-1}(u)} d(n, w)$. Like $k$-means clustering~\cite{ahmed2020k}, Cbp iteratively updates the assignment of each transit candidate to minimize the distance between each candidate and the centroid of their respective cluster. After the assignments stabilize, Cbp repeatedly merges a cluster with cardinality less than $k$ with the nearest cluster until each cluster has $k$ or more nodes. Here, we use the shortest distance from the centroid of the $i$-th cluster to the centroid of the $j$-th cluster as the distance from the $i$-th cluster to the $j$-th cluster. Then, Cbp checks $|\pi_{s \to \sigma_{t}}|$, the length of the shortest path from $s$ to the centroid of the cluster containing the true transit node $t$. 
If it exceeds $m$, Cbp assigns the first $m$ nodes from $\pi_{s \to \sigma_{t}}$, as $\pi^{1}$.
Otherwise, Cbp appends a randomly generated path as padding (Line 12-15 in Algorithm~\ref{alg:cbp}) and assigns the extended path to $\pi^{1}$.  Finally, Cbp computes $\pi^{2}$, the shortest path from the last node of $\pi^{1}$ to $g$, covering $t$, and returns $\pi^{1} \circ \pi^{2}$. The sequence of the first $m$ nodes of the output paths planned by Cbp is the same for all transit points belonging to the same cluster. Cbp satisfies the following.

\begin{proposition}
\label{prop:cbp}
Let all edges in $\mathcal{D}$ be undirected. If for any $t \in \mathscr{T}$, there exists a path from $s$ to $g$ while covering $t$, Cbp satisfies $(k, 0, m)$-Anonymity.     
\end{proposition}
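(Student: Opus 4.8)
The plan is to show directly that for \emph{every} transit candidate $t \in \mathscr{T}$ (all of which admit a covering path by hypothesis), Cbp returns a $(k,0,m)$-Anonymized Path with respect to $t$; this is stronger than, and hence implies, the required guarantee over all $(k,0,m)$-Anonymizable Tuples. The first simplification is that with $\ell = 0$ the Cost requirement $\min_{(i,j)\in T\times T} d(i,j) \geq 0$ in Definition~\ref{def:transit-anonymized-path} is satisfied automatically, since $d$ is non-negative. Thus it suffices to exhibit, for each $t$, a set $T \subseteq \mathscr{T}$ with $|T| \geq k$ such that every $t' \in T$ yields the same length-$m$ prefix as $t$, i.e. $\mathrm{Cbp}(\langle \mathcal{D}, s, g, t\rangle)|_m = \mathrm{Cbp}(\langle \mathcal{D}, s, g, t'\rangle)|_m$. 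The natural candidate is $T = C(t)$, the cluster containing $t$.

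I would then establish two structural facts. First, every cluster has at least $k$ members: the clustering phase explicitly merges any cluster of cardinality below $k$ into its nearest cluster until all clusters have size $\geq k$, so $|C(t)| \geq k$ whenever $|\mathscr{T}| \geq k$. If instead $|\mathscr{T}| < k$, then no $T \subseteq \mathscr{T}$ can have $|T| \geq k$, so there are no $(k,0,m)$-Anonymizable Tuples and the claim holds vacuously; hence I may assume $|\mathscr{T}| \geq k$. Second, the length-$m$ prefix $\pi^1$ produced for a query $t$ is a function of $C(t)$ alone: the centroid $\sigma_t = \argmin_{n} \max_{u \in C(t)} \min_{w \in v^{-1}(u)} d(n,w)$ depends only on the cluster, so the shortest path $\pi_{s \to \sigma_t}$, its truncation to $m$ nodes, and the deterministic per-cluster padding used when $|\pi_{s \to \sigma_t}| \leq m$ are all identical for every member of $C(t)$. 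Consequently $\mathrm{Cbp}(\langle \mathcal{D}, s, g, t'\rangle)|_m = \pi^1_{C(t)}$ for all $t' \in C(t)$.

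Combining these, for any $t$ I set $T = C(t)$: by the second fact $T$ lies inside $\{t' : \mathrm{Cbp}(\langle \mathcal{D}, s, g, t\rangle)|_m = \mathrm{Cbp}(\langle \mathcal{D}, s, g, t'\rangle)|_m\}$, by the first fact $|T| \geq k$, and the Cost condition is trivial, so the output is a $(k,0,m)$-Anonymized Path. It remains to confirm that the output is a genuine feasible path rather than Failure: $\pi^1$ starts at $s$ and has exactly $m$ nodes (an undirected back-and-forth move always lets us pad to length $m$), while $\pi^2$ is a shortest path from the last node of $\pi^1$ to $g$ covering $t$, whose existence follows from the covering-path hypothesis together with the symmetric reachability of the undirected graph. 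Since $\pi^1$ has length $m$, the first $m$ nodes of $\pi^1 \circ \pi^2$ coincide with $\pi^1$, so the prefix identity above indeed concerns the realized output path.

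I expect the main obstacle to be the second structural fact --- pinning down that the first $m$ nodes are \emph{exactly} equal across all members of a cluster. This hinges on treating the ``random'' padding as fixed once per cluster (equivalently, determined during the $t$-independent preprocessing), so that two queries $t, t'$ in the same cluster cannot diverge within the first $m$ steps; and on verifying that the extension $\pi^2$ from the (possibly truncated or padded) prefix endpoint can still reach $g$ while covering $t$, which is where the undirected assumption and the per-$t$ covering-path hypothesis are genuinely needed.
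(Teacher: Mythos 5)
Your proof is correct and follows essentially the same route the paper takes: with $\ell=0$ the distance condition is vacuous, the cluster $C(t)$ serves as the anonymity set $T$ (guaranteed to have $|C(t)|\geq k$ by the merging step, with the $|\mathscr{T}|<k$ case handled vacuously), and the first $m$ nodes of the output depend only on $C(t)$ because the centroid, the shortest path to it, and the padding are all determined in $t$-independent preprocessing, while undirectedness plus the per-$t$ covering-path hypothesis guarantee that $\pi^2$ exists so Cbp never returns Failure. Your explicit caveat that the ``random'' padding must be fixed once per cluster rather than redrawn per query is exactly the right reading of the algorithm and is needed for the prefix-equality claim to hold.
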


\begin{algorithm}[!th]
\caption{Clustering-based Planner (Cbp)}
\label{alg:cbp}
\begin{algorithmic}[1]
\State Randomly assign the transit candidates to $[|\mathscr{T}|/k]$ clusters s.t. the number of nodes in each cluster is equal to or more than $k$ 
\While{True}
    \For{each transit candidate $t_{i} \in \mathscr{T}$}
        \State Compute the distance to each centroid
        \State Assign $t_i$ to the cluster with the nearest centroid.
    \EndFor
    \For{each cluster}
        \State Recompute the centroid for each cluster;
    \EndFor
    \If{the assignment does not change} Break
    \EndIf
\EndWhile

\While{There exists a cluster consisting of less than $k$ nodes}
\For{each cluster consisting of less than $k$ nodes}
    \State Assign all elements of this cluster to the nearest cluster
\EndFor
\EndWhile

\State $\sigma_{t} \leftarrow$ the centroid of the cluster containing $t$ 
\State $\pi^{*}_{s \to \sigma_{t}} \leftarrow$ the shortest path from $s$ to $\sigma_{t}$
\If{$|\pi^{*}_{s \to \sigma_{t}}| \geq m$} $\pi^{1} \leftarrow \pi^{*}_{s \to \sigma_{t}}|_{m}$
\Else
    \State $r = s$, $\pi_{s \to r} = s$
    \While{$|\pi_{s \to r}| + |\pi^{*}_{r \to \sigma_{t}}| < m$}
        \State $r \leftarrow$ Randomly pick the neighbour of $r$
        \State Append $r$ to the tail of $\pi_{s \to r}$
    \EndWhile
    \State $\pi^{1} \leftarrow \pi_{s \to r} \circ \pi^{*}_{r \to \sigma_{t}}$
\EndIf
\State $\pi^{2} \leftarrow$ shortest path from $\pi^{1}_{|\pi^{1}|}$ to $g$ while covering $t$  \label{cbp-wrpt}
\State 
\Return $\pi^{1} \circ \pi^{2}$
\end{algorithmic}
\end{algorithm}

\section{Experiments}
\label{sec:evaluation}

\paragraph{Benchmarks and Settings}

We evaluate the performance of Pbp on six 2D grid world benchmark instances from the  Moving AI pathfinding benchmark set~\cite{sturtevant2012benchmarks}:  den101d, den201d, lak102d, lak510d, orz000d, and orz201d. We randomly select 5 pairs of start and goal points for each benchmark. The number of transit candidates,  $|\mathscr{T}|$, is 8, 12, and 16, and we randomly select $\mathscr{T}$ from $\mathcal{N}$ for each problem. 
We use a visibility function where nodes within a range less than or equal to distance $r$ are covered (for $r$ =0,2,10). All experiments used 4-way unit cost movement. 

All algorithms were implemented in C++, and experiments were run on an Intel(R) Xeon(R) CPU E5-2650 v2 @ 2.60GHz and 125GB of RAM, running Ubuntu 22.04.2 LTS. 
A time limit of 300 seconds/instance was used.

\subsection{Evaluation of Pbp with $m = \infty$}
We evaluate Pbp using
Merge-BB and DF-BB partitioning strategies, two merge orders (Random and CostAsc), and two WRTP heuristics: the blind heuristic (equivalent to breadth-first search) and the Tunnel heuristic. We also evaluate two baseline partitioning strategies:

\paragraph{Baseline \#1: Naive}  This generates a partitioning by randomly splitting the transit candidates into pairs of 2 nodes. This approach satisfies ($2, 1, \infty$)-Anonymity for this class of undirected grid maps. 

\paragraph{Baseline \#2: Depth-First Branch-and-Bound (DF-BB) partitioning}
The DF-BB partitioning strategy starts with all nodes unassigned and then constructs a partitioning by assigning one unassigned node  to an existing subset of $\Phi$ or a new subset (details in Supp.~\ref{supp:dfbb} ~\cite{takahashi2024top-supp}). 

For the privacy parameters, we set $k$ to 2 and 3 and $\ell$ to 1 and 10.

\paragraph{Results}

\begin{table*}[!th]
\centering
\caption{Comparison of each combination of partitioning, MergeOrder, and heuristic functions on coverage, the number of evaluated partitions, and the total execution time. Merge-BB with CostAsc using Tunnel achieved the best performance.}
\begin{tabular}{@{}c|c|c|rrr|rrc|ccc|ccc@{}}
\toprule
 &
   &
  $|\mathscr{T}|$ &
  \multicolumn{1}{c}{8} &
  \multicolumn{1}{c}{12} &
  \multicolumn{1}{c|}{16} &
  \multicolumn{1}{c}{8} &
  \multicolumn{1}{c}{12} &
  16 &
  8 &
  12 &
  16 &
  8 &
  12 &
  16 \\ \midrule
  Planner  &
  \begin{tabular}[c]{@{}c@{}}MergeOrder\end{tabular} &
  Heuristic &
  \multicolumn{3}{c|}{Coverage {[}\%{]}} &
  \multicolumn{3}{c|}{\begin{tabular}[c]{@{}c@{}}Total Time {[}s{]}\end{tabular}} &
  \multicolumn{3}{c|}{APR (higher=better)} &
  \multicolumn{3}{c}{MAC (lower=better)} \\ \midrule
\multicolumn{1}{c|}{\multirow{2}{*}{Naive}} &
  \multicolumn{1}{c|}{} &
  Blind &
  \multicolumn{1}{l}{n/a} &
  \multicolumn{1}{l}{n/a} &
  \multicolumn{1}{l|}{n/a} &
  \multicolumn{1}{l}{\textless{}1} &
  \multicolumn{1}{l}{\textless{}1} &
  \multicolumn{1}{c|}{-} &
  1.00 &
  1.00 &
  1.00 &
  \multicolumn{1}{l}{0.714} &
  \multicolumn{1}{l}{0.434} &
  \multicolumn{1}{l}{0.582} \\
\multicolumn{1}{l|}{} &
  \multicolumn{1}{l|}{} &
  Tunnel &
  \multicolumn{1}{l}{n/a} &
  \multicolumn{1}{l}{n/a} &
  \multicolumn{1}{l|}{n/a} &
  \multicolumn{1}{l}{\textless{}1} &
  \multicolumn{1}{l}{\textless{}1} &
  \multicolumn{1}{c|}{-} &
  1.00 &
  1.00 &
  1.00 &
  \multicolumn{1}{l}{0.714} &
  \multicolumn{1}{l}{0.434} &
  \multicolumn{1}{l}{0.582} \\ \midrule
\multirow{2}{*}{\begin{tabular}[c]{@{}c@{}}DF-BB\end{tabular}} &
   &
  Blind &
  77 &
  0 &
  0 &
  19 &
  \multicolumn{1}{c}{-} &
  - &
  1.00 &
  0.672 &
  0.439 &
  0.232 &
  - &
  - \\
 &
   &
  Tunnel &
  100 &
  20 &
  0 &
  2 &
  92 &
  - &
  1.00 &
  0.828 &
  0.558 &
  0.229 &
  - &
  - \\ \midrule
\multirow{4}{*}{\begin{tabular}[c]{@{}c@{}}Merge-BB\end{tabular}} &
  \multirow{2}{*}{Random} &
  Blind &
  74 &
  7 &
  0 &
  35 &
  185 &
  - &
  1.00 &
  1.00 &
  1.00 &
  0.232 &
  0.418 &
  0.561 \\
 &
   &
  Tunnel &
  100 &
  27 &
  0 &
  3 &
  13 &
  - &
  1.00 &
  1.00 &
  1.00 &
  0.229 &
  0.287 &
  0.452 \\ \cmidrule(l){2-15} 
 &
  \multirow{2}{*}{CostAsc} &
  Blind &
  77 &
  7 &
  0 &
  10 &
  80 &
  - &
  1.00 &
  1.00 &
  1.00 &
  0.231 &
  0.222 &
  0.223 \\
 &
   &
  Tunnel &
  100 &
  47 &
  0 &
  1 &
  5 &
  - &
  1.00 &
  1.00 &
  1.00 &
  0.229 &
  0.190 &
  0.205 \\ \bottomrule
\end{tabular}
\label{tab:baseline-comp}
\end{table*}

Tab.~\ref{tab:baseline-comp} shows the performance of the Partitioning-based Planner when $k=2, \ell=1$. We report coverage, APR, MAC, and execution time. Coverage is the percentage of problems on which Pbp completed the search (found the optimal solution and proved its optimality) within the time limit. We report mean MAC for the configurations which found satisfying solutions (ARP=1) for all instances within the time limit. Total Time denotes the average execution time for the instances where Merge-BB with the blind heuristic completed the search for $|\mathscr{T}|$ of 8 and 12. 

Tab.~\ref{tab:relative-mac} shows the mean and standard deviation of the MAC of Merge-BB divided by the MAC of Naive, showing over 50\% improvement of Merge-BB with CostAsc.

From Tab.~\ref{tab:baseline-comp}-\ref{tab:relative-mac}, we observe that: 
(1) Pbp (Merge-BB) consistently results in better MAC than the Naive baseline, showing that searching for an optimal partition achieves significantly better path costs than a naive partitioning.
(2) Merge-BB has significantly higher coverage than DF-BB, showing that the merge-based approach is a more efficient strategy.
(3) Overall, combining Merge-BB, CostAsc, and Tunnel gives the best performance.

\begin{figure}[t]
    \centering
    \includegraphics[width=\linewidth]{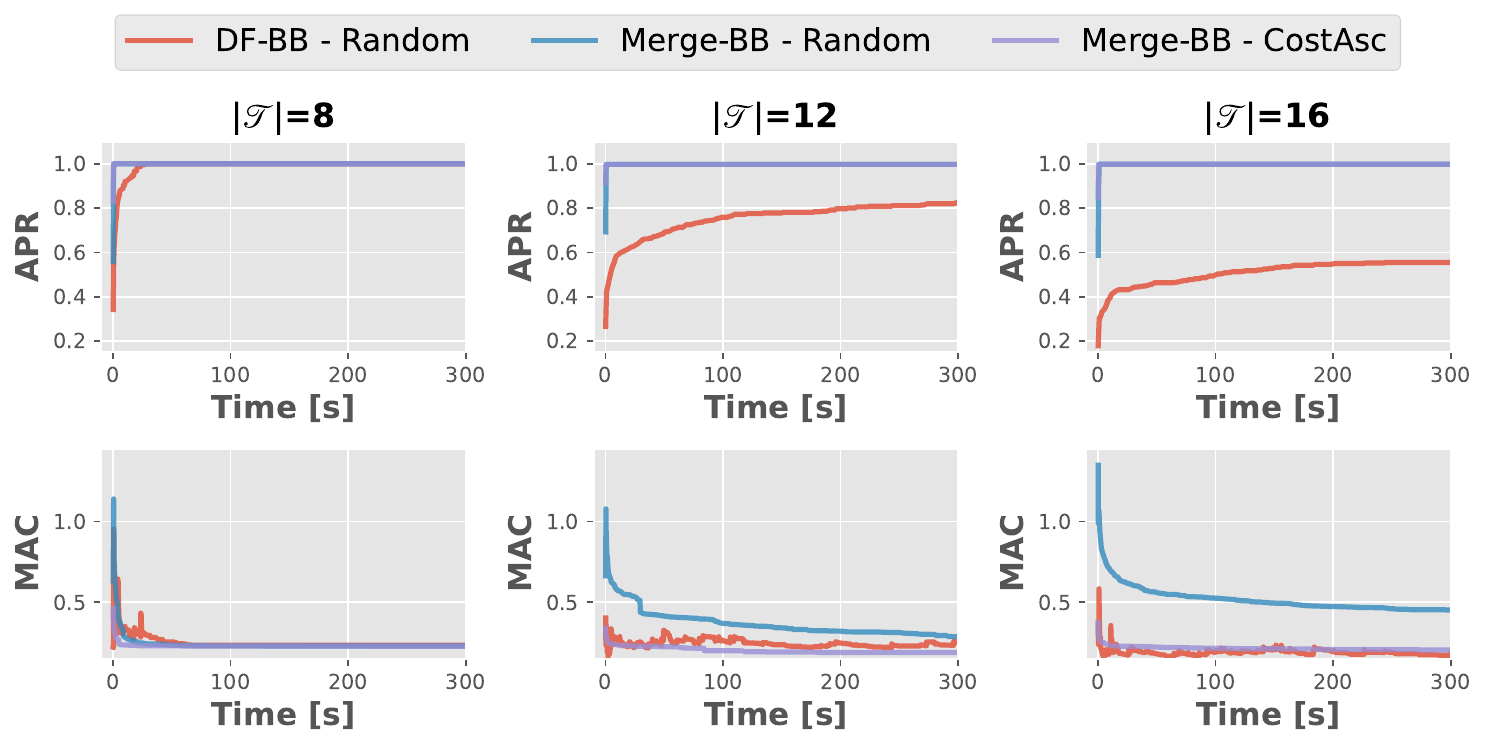}
    \caption{Convergence of ARP and MAC. Mege-BB with CostAsc shows the best performance.}
    \label{fig:mac_time}
\end{figure}

Fig.~\ref{fig:mac_time} depicts the changes over time in ARP and MAC when $k=2$, $\ell=1$. All combinations use $h_{tunnel}$. Mege-BB archives higher APR faster, and CostAsc can find solutions with lower MAC earlier.

\begin{figure}[!th]
    \centering
    \includegraphics[width=\linewidth]{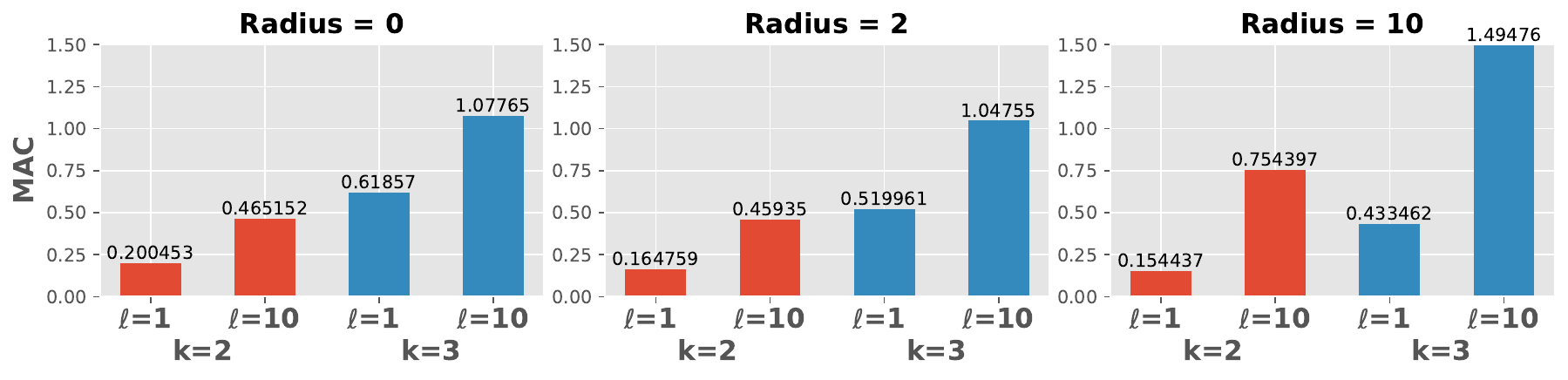}
    \caption{Impact of $k$, $\ell$, and radius $r$. Larger $k$ and $\ell$ increase MAC. The impact of $r$ is not monotonic.}
    \label{fig:impact:kl}
\end{figure}

Fig.~\ref{fig:impact:kl} shows MAC for each combination of $(k,\ell,m)$ and radius $r$ when using Merge-BB with CostAsc and $h_{tunnel}$. Larger $k$ and $\ell$ result in worse MAC. The correlation between MAC and $r$ is not monotonic since larger $r$ allows the agent to cover nodes with less movement, decreasing both the numerator and denominator of MAC.

\begin{figure}[!th]
    \centering
    \includegraphics[width=0.93\linewidth]{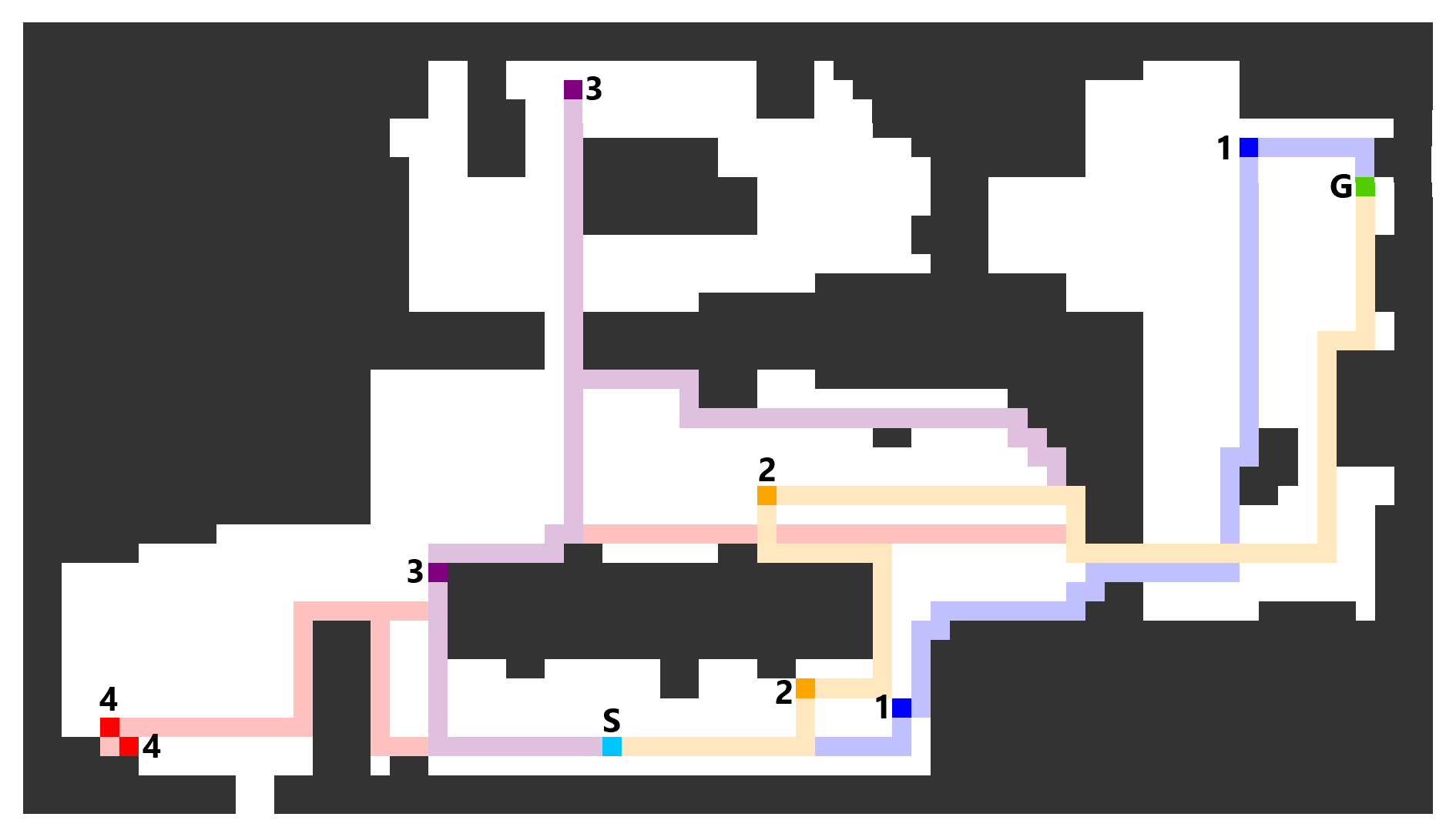}
    \caption{Example of $(2, 1, \infty)$-Anonnymized Paths. 
    }
    \label{fig:visesample}
\end{figure}

FIg.~\ref{fig:visesample} shows an example of optimal (2, 1, $\infty$)-Anonnymized Paths in \textit{den101} obtained by Pbp (Merge-BB). The cyan (S) and green (G) cells are the source and goal, respectively. The transit candidate belonging to the same subset ($1 \sim 4$) has the same color, while its corresponding path is colored in a lighter color. More qualitative examples can be found in Supp.~\ref{supp:additional-results}.

\begin{table}[!th]
\caption{Ratio of Merge-BB's MAC to Naive's MAC. Merge-BB reduces the MAC of a satisfying solution by more than 50\%.}
\begin{tabular}{@{}c|c|c|c|c@{}}
\toprule
MergeOrder &
  Heuristic &
  $|\mathscr{T}| = 8$ &
  $|\mathscr{T}| = 12$ &
  $|\mathscr{T}| = 16$ \\ \midrule
\multirow{2}{*}{Random} &
  Blind &
  \begin{tabular}[c]{@{}c@{}}0.425 \\ (±0.227)\end{tabular} &
  \begin{tabular}[c]{@{}c@{}}0.679 \\ (±0.380)\end{tabular} &
  \begin{tabular}[c]{@{}c@{}}0.937 \\ (±0.716)\end{tabular} \\ \cmidrule(l){2-5} 
 &
  Tunnel &
  \begin{tabular}[c]{@{}c@{}}0.423 \\ (±0.226)\end{tabular} &
  \begin{tabular}[c]{@{}c@{}}0.462 \\ (±0.347)\end{tabular} &
  \begin{tabular}[c]{@{}c@{}}0.706 \\ (±0.723)\end{tabular} \\ \midrule
\multirow{2}{*}{CostAsc} &
  Blind &
  \begin{tabular}[c]{@{}c@{}}0.435 \\ (±0.237)\end{tabular} &
  \begin{tabular}[c]{@{}c@{}}0.350 \\ (±0.160)\end{tabular} &
  \begin{tabular}[c]{@{}c@{}}0.378 \\ (±0.189)\end{tabular} \\ \cmidrule(l){2-5} 
 &
  Tunnel &
  \begin{tabular}[c]{@{}c@{}}0.423 \\ (±0.226)\end{tabular} &
  \begin{tabular}[c]{@{}c@{}}0.313 \\ (±0.140)\end{tabular} &
  \begin{tabular}[c]{@{}c@{}}0.349 \\ (±0.188)\end{tabular} \\ \bottomrule
\end{tabular}
\label{tab:relative-mac}
\end{table}

\subsection{Evaluation with bounded $m$}

We compared the three $m$-bounded anonymity planners, $m$-Pbp, Rbp, and Cbp, for various values of $m$ and $k$. For each problem, we set $m$ such that $m / |\pi^{*}|$ is [0.1, 0.3, 0.5, 1.0, 5.0, 10.0], where $\pi^{*}$ is the length of the shortest path for that problem. $k$ is [2, 3, 5]. The remaining parameters are kept constant: $\ell=1$, $|\mathscr{T}|=8$, and the heuristic function is $h_{tunnel}$. 

Fig.~\ref{fig:impact-m} shows that as $m$ increases, the MAC of Rbp and Cbp exhibits exponentia growth (y-axis is log scale), while $m$-Pbp's performance converges towards that of Pbp. 
$m$-Pbp aims to cover all nodes within the same partition, while Cbp strives to move towards the centroid of the cluster. Thus, when $m$ is small enough that Cbp doesn't require appending a random-walking path, the MAC of Cbp surpasses that of $m$-Pbp. However, $m$-Pbp exhibits superior performance compared to the other methods for larger values of $m$. 

Tab.~\ref{tab:impact-m-time} shows the execution time of each planner. Cbp is clearly faster than $m$-Pbp. Larger $m$ makes Cbp faster because the path after the $m$-th node tends to be shorter, reducing the runtime of the search performed by Line~\ref{cbp-wrpt} in Alg.~\ref{alg:cbp}. However, this effect decreases if $m$ is too large for Cbp to need additional time to append random nodes.

\begin{figure}[!th]
\begin{minipage}{0.5\textwidth}
\centering
\captionof{table}{Comparison of the runtime ([s]) for each planner. Rbp and Cbp show better scalability compared to $m$-Pbp.
}
\label{tab:impact-m-time}
\begin{tabular}{@{}c|c|c|c|c|c|c@{}}
\toprule
$m / |\pi^{*}|$ &  0.1    & 0.3    & 0.5    & 1.0    & 5.0    & 10.0    \\ \midrule
Rbp   & 0.006  & 0.006  & 0.006  & 0.006  & 0.006  & 0.006  \\
Cbp   & 0.098  & 0.085  & 0.046  & 0.009  & 0.006  & 0.006  \\
$m$-Pbp & 63.985 & 63.985 & 63.985 & 63.985 & 63.985 & 63.985 \\ \bottomrule
\end{tabular}
\vspace{3.0mm}
\end{minipage}
\hfill
\begin{minipage}{0.5\textwidth}
\centering 
\includegraphics[width=\linewidth]{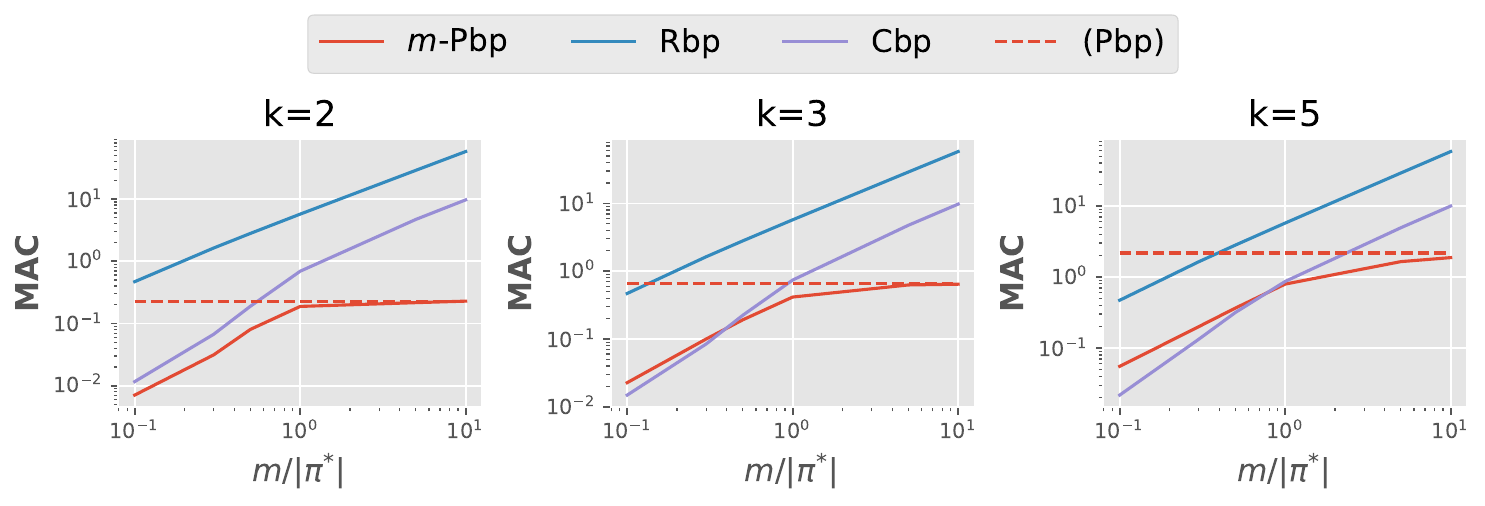}
\caption{Impact of $k$ and $m$ on MAC. While MAC of Rbp increases linearly with respect to $m$, MAC of $m$-Pbp converges to MAC of Pbp.}
\label{fig:impact-m}
\end{minipage}
\end{figure}

\section{Related Work}
\label{sec:related-work}

\paragraph{$k$-Anonymity}

$k$-anonymity is a fundamental concept in data privacy and anonymization that aims to safeguard individual identities in a dataset while preserving its utility~\cite{sweeney2002k,terzi2015survey}.  $k$-anonymity seeks to render each record in a dataset indistinguishable from at least $k$-1 other records, i.e., each individual's data is grouped with a minimum of $k$-1 other individuals with similar attributes. This grouping makes it challenging to identify a specific individual within the group.

\paragraph{Obfuscation}

Some existing methods for goal obfuscation leverage concepts similar to $k$-anonymity. For instance,~\citet{kulkarni2018resource,kulkarni2019unified} define a secure path as one where $k$ different goals result in the same path, thereby making it difficult for an observer to discern the true purpose among these $k$ nodes. Another example is \textit{Dissimulation} proposed in~\citet{masters2017deceptive}, where the true goal is considered to be obfuscated if there exist other nodes that look like the goal equally or more than the real goal. While our work refrains from making any assumptions regarding how the observer deduces the agent's intention, some studies ~\citep{masters2017deceptive,inproceedings,lewis2023deceptive,savas2022deceptiveuncertainty} model the inference process of the observer and devise obfuscation techniques tailored to these models. However, approaches based on such models do not provide a guarantee of security against adversaries who do not adhere to the model assumptions.

\section{Conclusion}
\label{sec:conclusion}

This paper introduced the Transit Obfuscation Problem and proposed novel techniques to address this challenge. We introduced $(k, \ell, m)$-Anonymity as a measure of concealment achieved by a path planner.
We proposed a Pbp, a partitioning based algorithm to achieve $(k, \ell, \infty)$-Anonymity, and evaluated its performance on 2D grid maps with obstacles.
We showed that Pbp with a merge-based branch-and-bound strategy significantly outperforms baseline partitioning approaches.

Although we showed that Merge-BB with the APR and MAC objectives is a viable approach to partitioning, a complete branch-and-bound search to find and prove the optimality of a solution poses a scalability challenge. For example, although our current implementation of Pbp can find solutions for $|\mathscr{T}| =16$ (Tab.~\ref{tab:baseline-comp}), it can not complete the search and prove optimality within the 300 sec. limit. Search algorithms finding good partitionings quickly without an optimality guarantee (e.g., local search/metaheuristics) are a direction for future work.

We also investigated algorithms for $m$-bounded anonymity ($m < \infty$). We showed that while $m$-Pbp yielded the best MAC scores, Cbp offers fairly good MAC scores but runs much faster. Future work will investigate additional approaches to trading off scalability vs. solution quality.

Finally, while this work focused on a single agent and single adversary in a static environment, an extension of our proposed techniques to more complex scenarios, such as multi-agent systems and dynamic environments, is another direction for future work.



\bibliographystyle{ACM-Reference-Format} 
\bibliography{ref}

\makeatletter\@input{suppaux.tex}\makeatother


\end{document}





\pagestyle{fancy}
\fancyhead{}

\maketitle 

\appendix

\beginsupp

\section{Supplement for Sec.~\ref{sec:TA}}
\label{supp:proof:sec3}

\subsection{Proofs for Sec.~\ref{sec:TA}}

\begin{proof}[Proof of Prop.~\ref{thm:pathcoverage-path} (3C Condition for Output Path)]

$\hquad$ We will provide the necessary and sufficient conditions.

\textit{Necessary Condition:} Suppose $\pi= \mathcal{A}(\langle \mathcal{D}, s, g, t \rangle)$ is a $(k, \ell, \infty)$-Anonymized Path from $s$ to $g$. This means that there exists a set $T \subseteq \{t' | t' \in \mathscr{T} \text{ s.t. } \pi = \mathcal{A}(\langle \mathcal{D}, s, g, t' \rangle) \}$ satisfying $|T| \geq k$ and $\min_{(i, j ) \in T \times T } d(i, j) \geq \ell$, which ensures conditions (1) and (2) are met. In addition, the first node of $\pi$ is $s$, and the last node f $\pi$ is $g.$. Then, if condition (3) does not hold, it implies that $\pi$ does not cover all nodes in $T$, and there exists a node $t' \in T$ that is not covered by $\pi$. However, since $\mathcal{A}(\langle \mathcal{D}, s, g, t' \rangle)$ covers $t'$, this contradicts the assumption that $\pi = \mathcal{A}(\langle \mathcal{D}, s, g, t' \rangle)$.

\textit{Sufficient Condition:} Suppose there exists a set of nodes $T \subseteq \mathscr{T}$ that satisfies conditions (1), (2), and (3) for $\pi$. Then, there exists a set of nodes $T \subseteq \{t' | t' \in \mathscr{T} \text{ s.t. } \pi = \mathcal{A}(\langle \mathcal{D}, s, g, t \rangle) = \mathcal{A}(\langle \mathcal{D}, s, g, t' \rangle) \}$ with $|T| \geq k$ and $\min_{(i, j ) \in T \times T } d(i, j) \geq \ell$. Hence, $\pi$ is a $(k, \ell, \infty)$-Anonymizable Path.

\end{proof}

\begin{proof}[Proof of Lemma~\ref{lem:path-extension} (Path-Extension)]
By Definition~\ref{def:transit-anonymized-path}, there exists a set $T_{\mathcal{A}} \subseteq \{t' $$ | t' $$ \in $$ \mathscr{T} \text{ s.t. } \mathcal{A}(\langle $$ \mathcal{D}, $$ s, $$ g, $$ t \rangle)|_m $ $=$ $ \mathcal{A}(\langle $ $ \mathcal{D}, s, $$ g, $$ t' \rangle)|_m $$ \}$ such that $|T_\mathcal{A}| \geq k \hquad \text{and} \hquad \min_{(i, j ) \in T \times T} d(i, j) \geq \ell$. Let $B$ be a path planner that takes a tuple $\langle \mathcal{D}, S', g', t \rangle$ and returns the concatenation of paths from $s'$ to $s$, $\mathcal{A}(\langle \mathcal{D}, s, g, t \rangle)$, and $g$ to $g'$. Then, there exists a set $T_{\mathcal{B}} \subseteq \{t' | t' \in \mathscr{T} \text{ s.t. } \mathcal{B}(\langle \mathcal{D}, s', g', t \rangle) = \mathcal{B}(\langle \mathcal{D}, s', g', t' \rangle) \}$ such that $|T_{\mathcal{B}}| \geq k$ and $\hquad \min_{(i, j ) \in T \times T} d(i, j) \geq \ell$.
\end{proof}

\subsection{Additional Properties of $(k, \ell, m)$-Anonymity}
\label{supp:additional-properties}

First, $(k, \ell, m)$-Anonymizable Tuple and Anonymized Path satisfy \textit{monotonicity}.

\begin{lemma}[Monotonicity of Input and Output]
\label{lemma:mono}
Let $(k, \ell, m)$ and $(k', \ell', m')$ be an arbitrary pair of parameters such that $k' \leq  k$, $\ell' \leq \ell$, and $m' \leq m$. Then, for any domain, $(k, \ell, m)$-Anonymizable Tuple is $(k', \ell', m')$-Anonymizable Tuple, and $(k, \ell, m)$-Anonymized Path is $(k', \ell', m')$-Anonymized Path
\end{lemma}

\begin{proof}[Proof of Lemma~\ref{lemma:mono} ((Monotonicity of Input and Output)]

If the input is a $(k, \ell, m)$-Anonymizable Tuple, there exists a path planner $\mathcal{A}$ such that there exists a set $T \subseteq \{ t' | t' \in \mathscr{T} \text{ and } \mathcal{A}(\langle \mathcal{D}, s, g, t \rangle)|_{m'}$ $=\mathcal{A}(\langle \mathcal{D}, s, g, t' \rangle)|_{m'} \}$ satisfying $|T| \geq k \geq k'$ and $\min_{(i, j) \in T \times T}$ $d(i, j) \geq \ell \geq \ell'$. Likewise, if the output path is a $(k, \ell, m)$-Anonymized Path planned by $\mathcal{A}$, there exists a set $T \subseteq \{ t' | t' \in \mathscr{T} \text{ and } \mathcal{A}(\langle \mathcal{D}, s, g,$ $t \rangle)|_{m'} = \mathcal{A}(\langle \mathcal{D}, s, g, t' \rangle)|_{m'} \}$ satisfying $|T| \geq k \geq k'$ and $\min_{(i, j) \in T \times T} $ $ d(i, j) \geq \ell \geq \ell'$. 
\end{proof}

The necessary and sufficient conditions for an input tuple to be $(k, \ell, \infty)$-Anonymizable Tuple are as follows.

\begin{proposition}[3C Condition for Input Tuple]
\label{thm:pathcoverage}
A tuple $\langle \mathcal{D}, s, g, t \rangle$ is $(k, \ell, \infty)$-Anonymizable iff there exists a set of nodes $T \subseteq \mathscr{T}$ that satisfies the following conditions:
\begin{enumerate}
\item Cardinality: $|T| \geq k$
\item Cost: $\min_{(i, j ) \in T \times T } d(i, j) \geq \ell$
\item Coverage: There exists a path $\pi$ from $s$ to $g$ that covers all nodes in $T$
\end{enumerate}
\end{proposition}

\begin{proof}[Proof of Prop~\ref{thm:pathcoverage} (3C Condition for Input Tuple)]

$\hquad$ We will provide the necessary and sufficient conditions.

\textit{Necessary Condition:} Let $\pi$ $= \mathcal{A}(\langle \mathcal{D}, s, g, t \rangle)$. If $\langle \mathcal{D}, s, g, t \rangle$ is a $(k, \ell, \infty)$-Anonymizable Tuple, there exists a path planner $\mathcal{A}$ such that there exists a set $T \subseteq \{t' | t' \in \mathscr{T} \text{ s.t. } \pi = \mathcal{A}(\langle \mathcal{D}, s, g, t' \rangle) \}$ satisfying $|T| \geq k$ and $\min_{(i, j ) \in T \times T } d(i, j) \geq \ell$, which ensures conditions (1) and (2) are met. If condition (3) does not hold, it implies that $\pi$ does not cover all nodes in $T$, and there exists a node $t' \in T$ that is not covered by $\pi$. However, since $\mathcal{A}(\langle \mathcal{D}, s, g, t' \rangle)$ covers $t'$, this contradicts the assumption that $\pi = \mathcal{A}(\langle \mathcal{D}, s, g, t' \rangle)$.

\textit{Sufficient Condition:} Suppose there exists a set of nodes $T \subseteq \mathscr{T}$ that satisfies conditions (1), (2), and (3) for $\langle \mathcal{D}, s, g, t \rangle$. Then, there exists a path $\pi$ from $s$ to $g$ that covers all nodes within $T$. If a path planner $\mathcal{A}$ returns $\pi$ when taking any node within $T$ as the transit node, this planner satisfies $T \subseteq \{t' | t' \in \mathscr{T} \text{ s.t. } \mathcal{A}(\langle \mathcal{D}, s, g, t \rangle) = \mathcal{A}(\langle \mathcal{D}, s, g, t' \rangle) \}$ with $|T| \geq k$ and $\min_{(i, j ) \in T \times T } d(i, j) \geq \ell$. Hence, $\langle D, s, g, t \rangle$ is a $(k, \ell, \infty)$-Anonymizable Tuple.

\end{proof}

If the domain consists of an undirected graph, we can also ensure the monotonicity of $(k, \ell, m)$-Anonymity.

\begin{lemma}[Monotonicity of Planner]
\label{lemma:mono-planner}
If all edges in the domain are undirected and $m' \leq m$, a planner satisfying $(k, \ell, m)$-Anonymity achieves $(k, \ell, m')$-Anonymity.
\end{lemma}

\begin{proof}[Proof of Lemma.~\ref{lemma:mono-planner} (Monotonicity of Planner)]
We'll start by proving that an $(k, \ell, m')$-Anonymizable Tuple also qualifies as an $(k, \ell, m)$-Anonymizable Tuple, where $m$ and $m'$ are integers or infinity, and $m \geq m'$.

For an input tuple to be considered $(k, \ell, m)$-Anonymizable (with $m \geq 1$), it must satisfy certain conditions: there should exist a set $T \subseteq \mathscr{T}$ with at least $k$ elements, the minimum distance between any pair $(i, j) \in T \times T$ should be $\ell$ or more, and for each element $t$ in $T$, a valid path from $s$ to $g$ covering $t$ must exist.

In cases where all edges in the domain are undirected, a path from $s$ to $g$ covering all nodes in $T$ is feasible. Thus, by invoking Prop.~\ref{thm:pathcoverage}, we conclude that any $(k, \ell, m')$-Anonymizable Tuple is $(k, \ell, \infty)$-Anonymizable.

Then, a path planner satisfying $(k, \ell, m)$-Anonymity, which outputs $(k, \ell, m)$-Anonymized Path for any $(k, \ell, m)$-Anonymizable Tuple, also outputs $(k, \ell, m')$-Anonymized Path for any $(k, \ell, m')$-Anonymizable Tuple. Here, we use the monotonicity of $(k, \ell, m)$-Anonymized Path and $(k, \ell, m)$-Anonymizable Tuple.
\end{proof}

\section{Proofs for Sec.~\ref{sec:partitioning}}
\label{supp:proof:sec4}

\begin{lemma}
\label{lem:local-kl-ta}
Let $|\mathscr{T}_{(s, g)}|$ be the number of transit points whose input tuples with $s$ and $g$ are $(k, \ell, \infty)$-Anonymizable Tuples. Then, if $|\mathscr{T}_{(s, g)}| \geq 1$, Alg.~\ref{alg:1} satisfies $(k, \ell, \infty, \sum^{\Phi-1}_{\phi=1} |T_{\phi}| / |\mathscr{T}_{(s, g)}|)$-Local Anonymity for any combination of $s$, $g$, and $\mathcal{D}$.
\end{lemma}

\begin{proof}[Proof of Lemma~\ref{lem:local-kl-ta}]
Given the constitution of $T_{\phi}$, $\phi = 1, 2, ..., \Phi-1$ and Theorem~2, any output path is $(k, \ell)$-Anonymized Path if the $t$ belongs to one of $T_{\phi}$, $\phi = 1, 2, ..., \Phi-1$.
\end{proof}

\begin{proof}[Proof of Theorem~\ref{thm:comp} (Completeness)] Direct application of $\\$ Lemma~\ref{lem:local-kl-ta} and Theorem~\ref{thm:existence}.
\end{proof}

\begin{proof}[Proof of Prop.~\ref{prop:opt} (Optimality)]
Suppose there exists a path planner $\mathcal{A}'$ that gives a larger APR. In this case, we can partition nodes by gathering nodes for which $\mathcal{A}'$ outputs the same path to a subset. However, Alg.~\ref{alg:1} using Alg.~\ref{alg:mergebb} can find this partition and should select it. Additionally, Alg.~\ref{alg:1} with Alg.~\ref{alg:mergebb} always finds the shortest path for each transit point and selects the partition with the minimum average cost, resulting in the smallest MAC.
\end{proof} 

\section{Proofs for Sec.~\ref{sec:m-bounded}}
\label{supp:proof:sec5}

\begin{proof}[Proof of Prop.~\ref{prop:rbp} (($k, \ell, m$)-Anonymity of Rbp)]
Let $\mathcal{A}_{Rbp}$ be Rbp. We first show that if all edges are undirected, Rbp does not return False for any $(k, \ell, m)$-Anonymizable Tuple. Since there exists a path from the source $s$ and the goal $g$ while covering the transit node $t$, and $s$ and $g$ are different nodes, Rbp can continue randomly picking at least one neighbor when planning the first $m$ nodes. It is also obvious that there exists a path from the source $s$ and the goal $g$ while covering the transit node $t$ if the domain consists of an undirected graph. Recall that the first $m$ nodes of the output path planned by Rbp are the same except in the case of the output being Failure. Then, for any $(k, \ell, m)$-Anonymizable Tuple $\langle D, s, g, t \rangle$, there exists a set $T \subseteq \mathscr{T}$ such that $t \in T$, $|T| \geq k$ and $\min_{(i, j) \in T \times T } d(i, j) \geq \ell$. Then, we have that for any $(k, \ell, m)$-Anonymizable Tuple $\langle D, s, g, t \rangle$, there exists a set $T \subseteq \{ t' | t' \in \mathscr{T} \text{ and } \mathcal{A}_{Rbp}(\langle \mathcal{D}, s, g, t \rangle)|_{m} = \mathcal{A}_{Rbp}(\langle \mathcal{D}, s, g, t' \rangle)|_{m} \}$ satisfying $|T| \geq k$ and $\min_{(i, j) \in T \times T } d(i, j) \geq \ell$, which means that the output path for any $(k, \ell, m)$-Anonymizable Tuple is always $(k, \ell, m)$-Anonymized Path.
\end{proof}

\begin{proof}[Proof of Prop.~\ref{prop:mpbp} (($k, \ell, m$)-Anonymity of $m$-Pbp)]
Direct application of Lemma.~\ref{lemma:mono-planner}.
\end{proof}

\begin{proof}[Proof of Prop.~\ref{prop:cbp} (($k, \ell, m$)-Anonymity of Cbp)]
If all edges in $\mathcal{D}$ are undirected, and there exists a path from $s$ to $g$ while covering $t$ for any $t \in \mathscr{T}$, there always exists a path from $s$ to $g$ via $\sigma_{t}$, and $t$ is coverable from the middle of that path. In addition, the first $m$ nodes planned by Cbp are the same for at least $k$ nodes, whose minimum distance is $0$.
\end{proof}

\section{DF-BB pseudocode}
\label{supp:dfbb}

Alg.~\ref{alg:dfbb} shows the pseudo-code of Depth-BB partitioning. The set of unassigned nodes is represented as $\mathscr{T}^{U}$.

\begin{algorithm}[!th]
\caption{Depth-BB Partioning}
\label{alg:dfbb}
\begin{algorithmic}[1]

\Require The domain $\mathcal{D}$, the set of transit candidates $\mathscr{T}$, the source $s$, the goal $g$, and the privacy parameters ($k$, $\ell$)
\Ensure The best partition $\Psi^{*}$ of $\mathscr{T}$

\State $|ap|^{*} \leftarrow 0$, $mac^{*} \leftarrow \infty$, $\Psi^{*} = \emptyset$    

\Function{Depth\_BB\_Search}{$\Psi$, $\mathscr{T}^{U}$}
\If{$\mathscr{T}^{U}$ is empty}
    \parState {$\Psi_{+} \leftarrow \{\psi | \psi \in \Psi$ such that $\psi$ satisfies all conditions of Theorem~1 $\}$ }
    \State $|ap| \leftarrow \sum_{\psi \in \Psi_{+}} |\psi|$
    \State $mac \leftarrow \sum_{\psi \in \Psi_{+}} ac(\psi) / |ap|$

    \If{($|ap| > |ap|^{*}$) or ($|ap| = |ap|^{*}$ and $mac < mac^{*}$)}
        \State $|ap|^{*} \leftarrow |ap|$, $mac^{*} \leftarrow mac$, $\Psi^{*} \leftarrow \Psi$
    \EndIf
    \State
    \Return
\EndIf
\State $n \leftarrow$ top of $\mathscr{T}^{U}$
\For{$\psi \in \Psi$}
    \If{Prunable($\psi$, $\{n\}$)}
        Continue
    \EndIf
    \State Depth\_BB\_Search($(\Psi \setminus \psi) \cup (\psi \cup n)$, $\mathscr{T}^{U}$ $\setminus n$)
\EndFor
\State Depth\_BB\_Search($\Psi \cup \{n\}$, $\mathscr{T}^{U}$ $\setminus n$)
\EndFunction

\State
\State Depth\_BB\_Search($\emptyset$, $\mathscr{T}$)
\State
\Return $\Psi^{*}_{+} \cup \{\bigcup \Psi^{*} \setminus \Psi^{*}_{+} \}$

\end{algorithmic}
\end{algorithm}

\section{Additional Results}
\label{supp:additional-results}

Fig.~\ref{den201d} to \ref{orz201d} show the qualitative examples of $(2, 1, \infty)$-Anonymized Paths on \textit{den201d}, \textit{lak202d}, \textit{lak510d}, \textit{orz000d}, and \textit{orz201d}.

Figure \ref{fig:den101} illustrates a comparison between the $(2, 1, \infty)$-Anonymized Path and the $(2, 10, \infty)$-Anonymized Path applied to the same problem. We can see that in the $(2, 1, \infty)$-Anonymized Path, the two nodes in the left corner are grouped into the same partition. However, in the $(2, 10, \infty)$-Anonymized Path, they belong to different partitions.

Tab.~\ref{tab:impact-m} presents the numeric values shown in Fig.~6.

\begin{figure}[H]
    \centering
    \includegraphics[width=\linewidth]{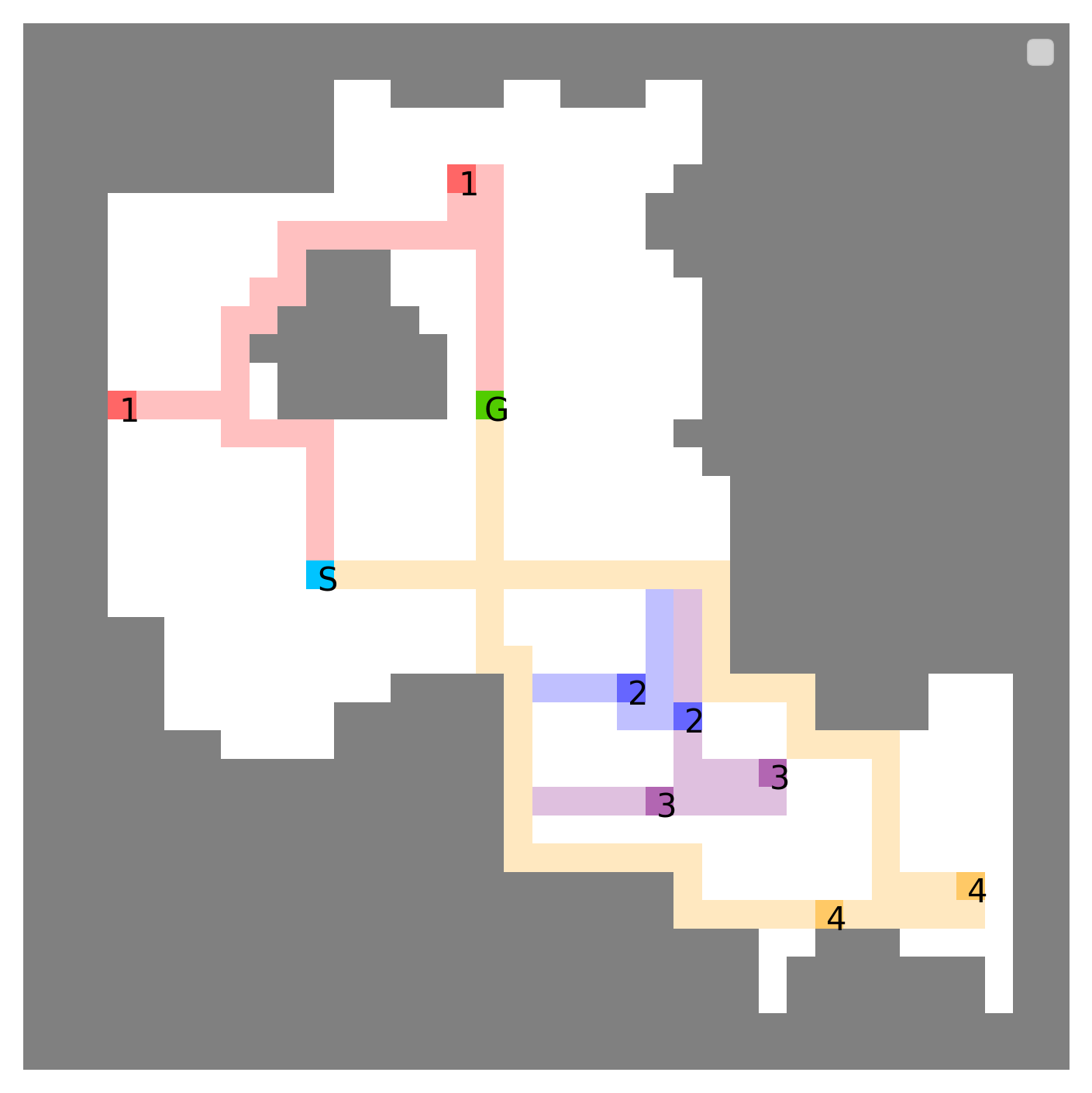}
    \caption{Optimal $(2, 1, \infty)$-Anonymized Path on \textit{den201d} found by Merge-BB with CostAsc and $h_{tunnel}$.}
    \label{den201d}
\end{figure}

\begin{figure}[H]
    \centering
    \includegraphics[width=\linewidth]{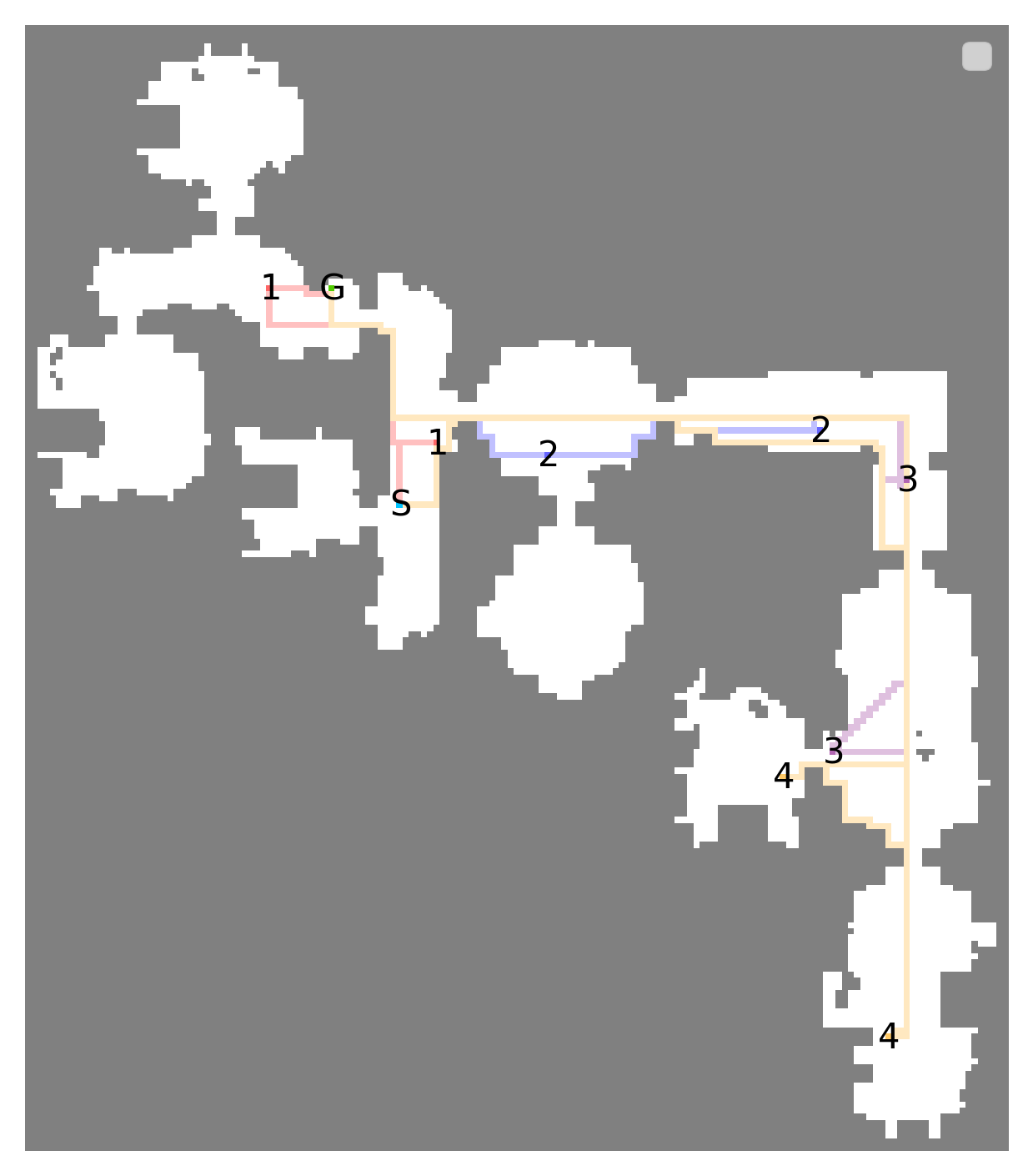}
    \caption{Optimal $(2, 1, \infty)$-Anonymized Path on \textit{lak202d} found by Merge-BB with CostAsc and $h_{tunnel}$.}
    \label{lak202d}
\end{figure}

\begin{figure}[H]
    \centering
    \includegraphics[width=\linewidth]{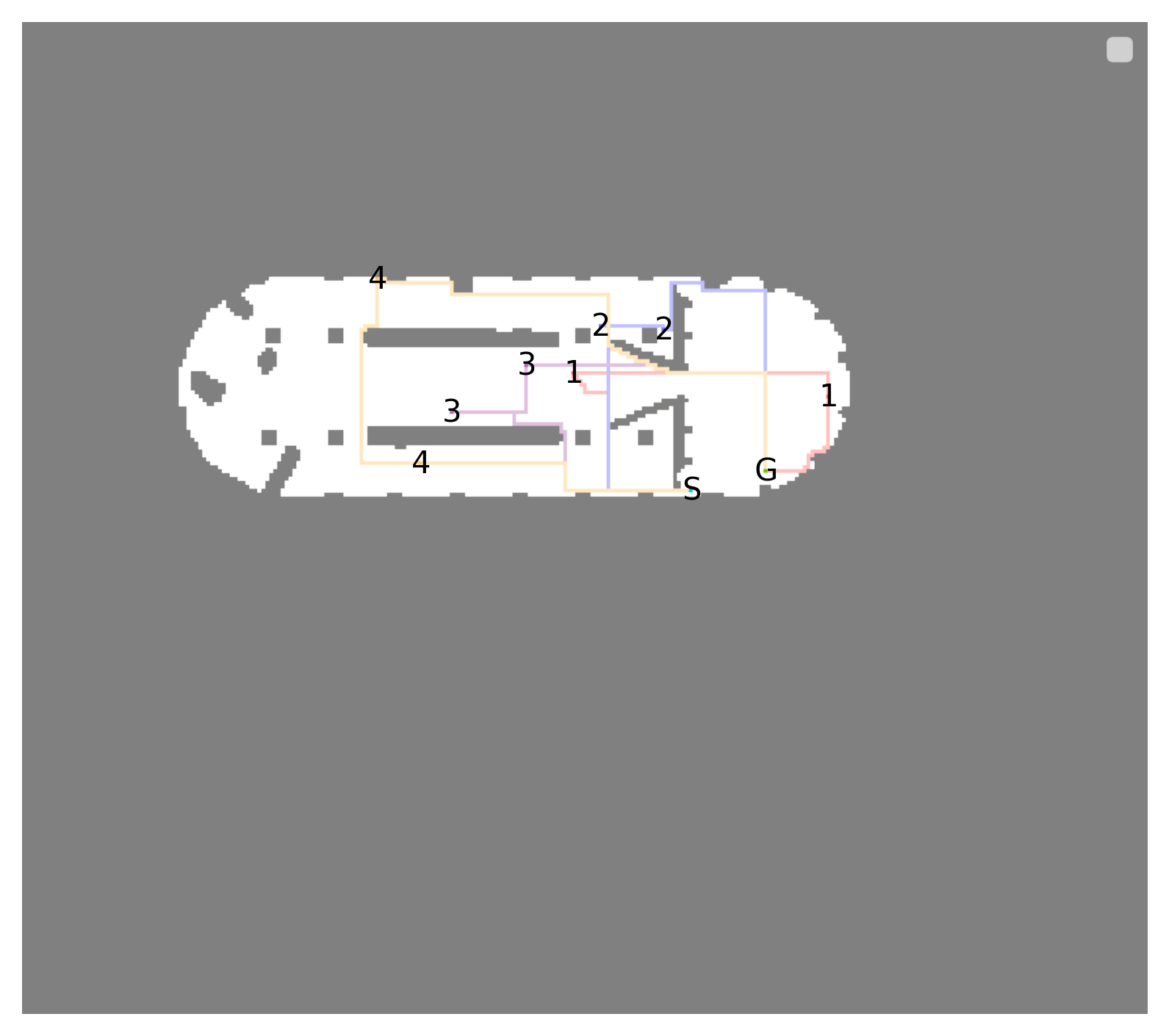}
    \caption{Optimal $(2, 1, \infty)$-Anonymized Path on \textit{lak510d} found by Merge-BB with CostAsc and $h_{tunnel}$.}
    \label{lak510d}
\end{figure}

\begin{figure}[H]
    \centering
    \includegraphics[width=\linewidth]{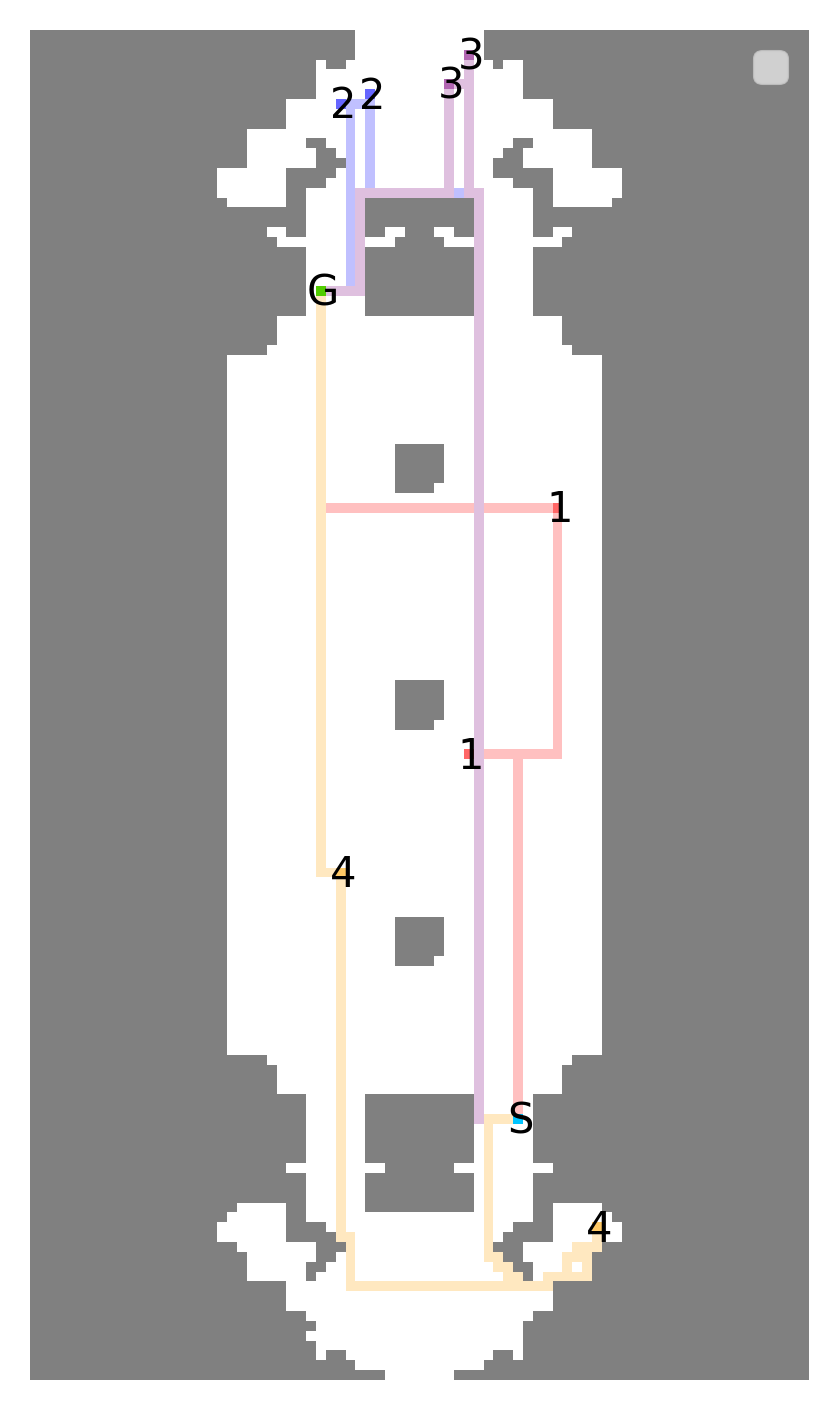}
    \caption{Optimal $(2, 1, \infty)$-Anonymized Path on \textit{orz000d} found by Merge-BB with CostAsc and $h_{tunnel}$.}
    \label{orz000d}
\end{figure}

\begin{figure}[H]
    \centering
    \includegraphics[width=\linewidth]{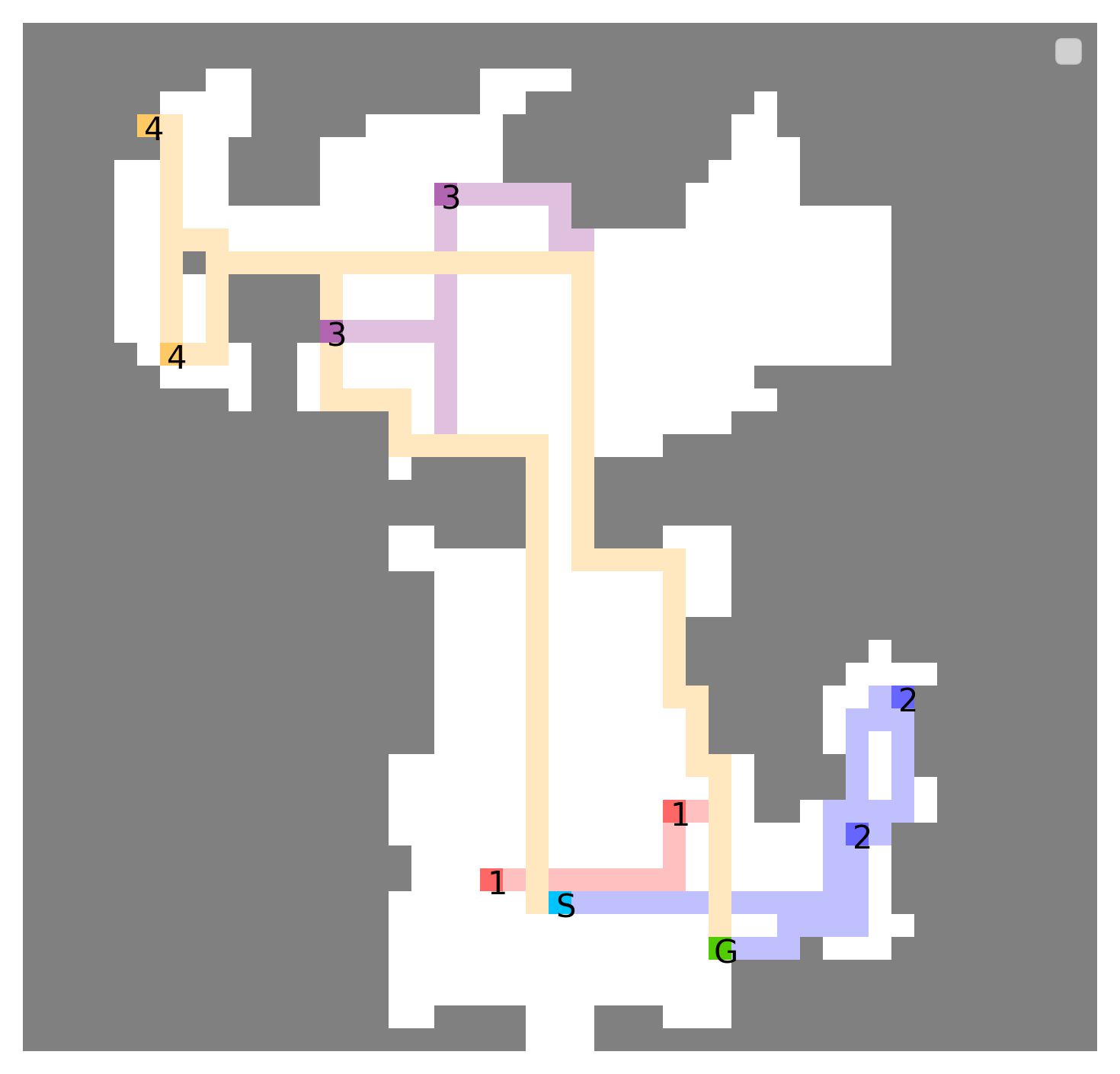}
    \caption{Optimal $(2, 1, \infty)$-Anonymized Path on \textit{orz201d} found by Merge-BB with CostAsc and $h_{tunnel}$.}
    \label{orz201d}
\end{figure}

\begin{figure}[H]
\centering
  \begin{minipage}[b]{\linewidth}
    \centering
    \includegraphics[width=\linewidth]{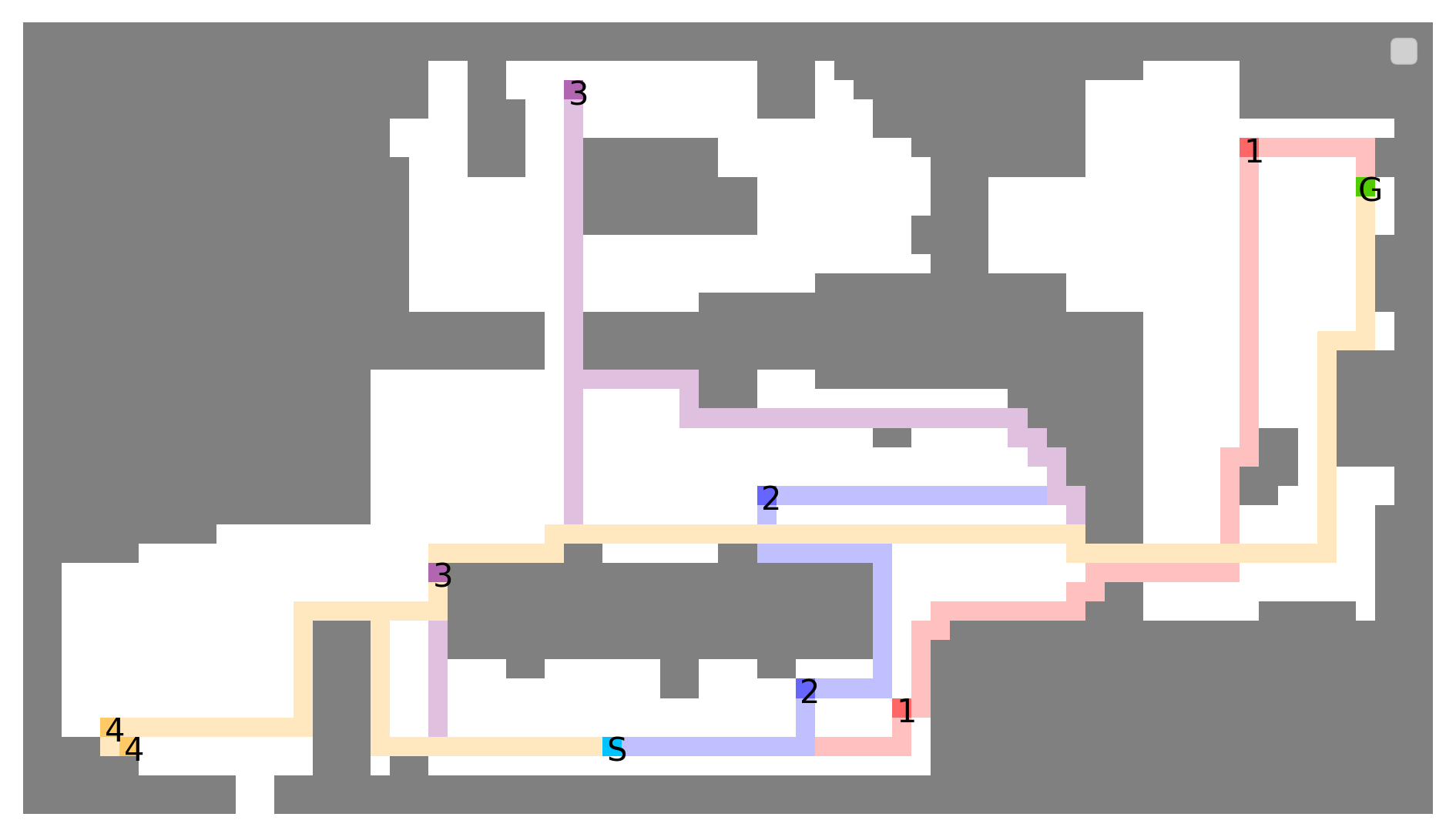}
    \subcaption{$(2, 1, \infty)$-Anonymized Path}
  \end{minipage}
  \\
  \begin{minipage}[b]{\linewidth}
    \centering
    \includegraphics[width=\linewidth]{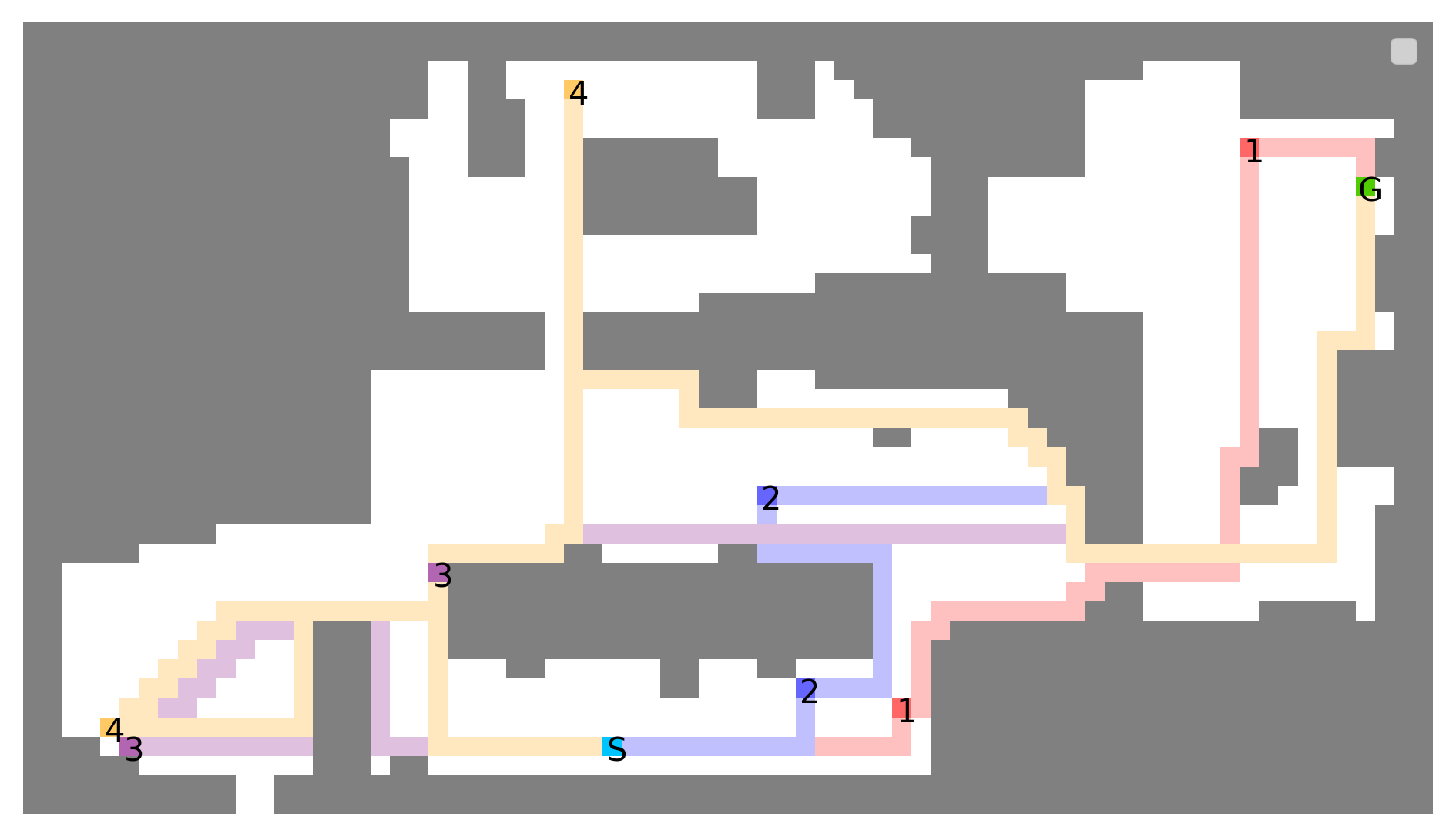}
    \subcaption{$(2, 10, \infty)$-Anonymized Path}
  \end{minipage}
  \caption{Optimal Anonymized Paths on \textit{den101d} found by Merge-BB with CostAsc and $h_{tunnel}$.}
  \label{fig:den101}
\end{figure}

\begin{table}[H]
\begin{tabular}{@{}c|c|cccccc@{}}
\toprule
  & $m / |\pi^{*}|$ & 0.1             & 0.3             & 0.5            & 1.0   & 5.0  & 10.0 \\ \midrule
k & Planner         &                 &                 &                &       &      &      \\ \midrule
\multirow{3}{*}{2} & $m$-Pbp & \textbf{0.00707} & \textbf{0.0313} & \textbf{0.0803} & \textbf{0.188} & \textbf{0.216} & \textbf{0.229} \\
  & Cbp             & 0.0116          & 0.0671          & 0.189          & 0.692 & 4.71 & 9.72 \\
  & Rbp             & 0.47            & 1.63            & 2.81           & 5.73  & 29   & 58.1 \\ \midrule
\multirow{3}{*}{3} & $m$-Pbp & 0.0228           & 0.1             & \textbf{0.192}  & \textbf{0.416} & \textbf{0.628} & \textbf{0.644} \\
  & Cbp             & \textbf{0.0149} & \textbf{0.0846} & 0.225          & 0.741 & 4.76 & 9.76 \\
  & Rbp             & 0.47            & 1.63            & 2.81           & 5.73  & 29   & 58.1 \\ \midrule
\multirow{3}{*}{5} & $m$-Pbp & 0.0551           & 0.2             & 0.365           & \textbf{0.793} & \textbf{1.64}  & \textbf{1.87}  \\
  & Cbp             & \textbf{0.0218} & \textbf{0.133}  & \textbf{0.316} & 0.864 & 4.93 & 9.95 \\
  & Rbp             & 0.47            & 1.63            & 2.81           & 5.73  & 29   & 58.1 \\ \bottomrule
\end{tabular}
    \caption{Impact of $k$ and $m$ on MAC ($\ell=1$ and $|\mathscr{T}|=8$). See Fig.6 for the visualization. Note that the values of Rbp are the same regardless of $k$ since Rbp does not depend on $k$.}
    \label{tab:impact-m}
\end{table}

\makeatletter\@input{mainaux.tex}\makeatother